\newacronym{RL}{rl}{reinforcement learning}
\newacronym{SIL}{sil}{self-imitation learning}
\newacronym{PPO}{ppo}{proximal policy optimization}
\newacronym{TD3}{td3}{twin-delayed deep deterministic policy gradient}
\newacronym{DDPG}{ddpg}{deep deterministic policy gradient}
\newacronym{TD}{td}{temporal difference}
\newacronym{EM}{em}{expectation maximization}
\newacronym{IS}{is}{importance sampling}
\newacronym{HER}{her}{hindsight experience replay}
\newacronym{HPG}{hpg}{hindsight policy gradient}
\newacronym{hEM}{$\mathrm{h}$em}{hindsight expectation maximization}
\newacronym{MDP}{mdp}{markov decision process}
\newacronym{Variational RL}{variational rl}{variational reinforcement learning}
\newacronym{ELBO}{elbo}{evidence lower bound}
\newcolumntype{C}[1]{>{\Centering}m{#1}}
\newtheorem{predefinition}{Definition}
\newtheorem{theorem}{Theorem}
\newtheorem{preproposition}{Proposition}
\newtheorem{lemma}{Lemma}
\title{Self-Imitation Learning via Generalized Lower Bound Q-learning}
\author{%
  Yunhao Tang \\
  Columbia University\\
  \texttt{yt2541@columbia.edu}
  }
\begin{document}

\maketitle

\begin{abstract}
Self-imitation learning motivated by lower-bound Q-learning is a novel and effective approach for off-policy learning. In this work, we propose a n-step lower bound which generalizes the original return-based lower-bound Q-learning, and introduce a new family of self-imitation learning algorithms. To provide a formal motivation for the potential performance gains provided by self-imitation learning, we show that n-step lower bound Q-learning achieves a trade-off between fixed point bias and contraction rate, drawing close connections to the popular uncorrected n-step Q-learning. We finally show that n-step lower bound Q-learning is a more robust alternative to return-based self-imitation learning and uncorrected n-step, over a wide range of continuous control benchmark tasks.
\end{abstract}

\section{Introduction}
Learning with off-policy data is of central importance to scalable \gls{RL}. The traditional framework of off-policy learning is based on \gls{IS}: for example, in policy evaluation, given trajectories $(x_t,a_t,r_t)_{t=0}^\infty$ generated under behavior policy $\mu$, the objective is to evaluate Q-function $Q^\pi(x_0,a_0)$ of a target policy $\pi$. Naive \gls{IS} estimator involves products of the form $\pi(a_t\mid x_t) / \mu(a_t\mid x_t)$ and is infeasible in practice due to high variance. To control the variance, a line of prior work has focused on operator-based estimation to avoid full \gls{IS} products, which reduces the estimation procedure into repeated iterations of off-policy evaluation operators \citep{precup2001off,harutyunyan2016q,munos2016safe}. Each iteration of the operator requires only local \gls{IS} ratios, which greatly stabilizes the update. 

More formally, such operators $\mathcal{T}$ are designed such that their fixed points are the target Q-function $\mathcal{T} Q^\pi = Q^\pi$. As such, these operators are \emph{unbiased} and conducive to theoretical analysis. However, a large number of prior work has observed that certain \emph{biased} operators tend to have significant empirical advantages \citep{hessel2018rainbow,barth2018distributed,kapturowski2018recurrent}. One notable example is the uncorrected $n$-step operator, which directly bootstraps from $n$-step target trajectories without \gls{IS} corrections \citep{hessel2018rainbow}. The removal of all \gls{IS} ratios biases the estimate, but allows the learning signal to be propagated over a longer horizon (in Section 2, we will characterize such effects as contraction rates). Indeed, when behavior trajectories are unlikely under the current policy, small \gls{IS} ratios $\pi(a_t\mid x_t) / \mu(a_t\mid x_t)$ quickly cut off the learning signal. In general, there is a trade-off between the fixed point bias and contraction rates. Empirical findings suggest that it might be desirable to introduce bias in exchange for faster contractions in practice \citep{rowland2019adaptive}.


Recently, \gls{SIL} has been developed as a family of novel off-policy algorithms which facilitate efficient learning from highly off-policy data \citep{oh2018self,gangwani2018learning,guo2019efficient}. In its original form, \gls{SIL} is motivated as lower bound Q-learning \citep{he2016learning}. In particular, let $Q_L(x,a) \leq Q^{\pi^\ast}(x,a)$ denote a lower bound of the optimal Q-function $Q^{\pi^\ast}$. Optimizing auxiliary losses which encourage $Q_\theta(x,a) \geq Q_L(x,a)$ could significantly speed up learning with the trained Q-function $Q_\theta(x,a)$. Such auxiliary losses could be extended to actor-critic algorithms with stochastic policies \citep{oh2018self}: \gls{SIL} suggests optimizing a policy $\pi_\theta(a\mid x)$ by maximizing an objective similar to $[Q^\mu(a\mid x) - V^{\pi_\theta}(x)]_+ \log \pi_\theta(a\mid x)$, where $V^{\pi_\theta}(x)$ is the value-function for policy $\pi_\theta$, with $[x]_+\coloneqq\max(0,x)$. The update is intuitively reasonable: if a certain actions $a$ is high-performing under behavior policy $\mu$, such that $Q^\mu(x,a) > V^{\pi_\theta}(x)$, the policy $\pi_\theta(a\mid x)$ should  imitate such actions.

On a high-level, \gls{SIL} is similar to the uncorrected $n$-step update in several aspects. With no explicit \gls{IS} ratios, both methods entail that off-policy learning signals propagate over long horizons without being \emph{cut-off}. As a result, both methods are biased due to the absence of proper corrections, and could be seen as trading-off fixed point bias for fast contractions.

\paragraph{Main idea.} In this paper, we make several theoretical and empirical contributions.
\begin{itemize}[topsep=0pt,parsep=0pt,partopsep=0pt,leftmargin=*]
\item \textbf{Generalized \gls{SIL}}. In Section \ref{sec:sil}, we propose generalized \gls{SIL} which strictly extends the original \gls{SIL} formulation \citep{oh2018self}. Generalized \gls{SIL} provides additional flexibility and advantages over the original \gls{SIL}: it learns from partial trajectories and bootstraps with learned Q-function; it applies to both stochastic and deterministic actor-critic algorithms.
\item \textbf{Trade-off.} In Section \ref{sec:tradeoff}, we formalize the trade-offs of \gls{SIL}. We show that generalized \gls{SIL} trades-off contraction rates with fixed point bias in a similar way to uncorrected $n$-step \citep{rowland2019adaptive}. Unlike uncorrected $n$-step, for which fixed point bias could be either positive or negative, the operator for \gls{SIL} induces \emph{positive} bias, which fits the motivation of \gls{SIL} to move towards optimal Q-functions.
\item \textbf{Empirical.} In Section \ref{section:exp}, we show generalized \gls{SIL} outperforms alternative baseline algorithms.
\end{itemize}


\section{Background}

Consider the standard formulation of \gls{MDP}. At a discrete time $t\geq 0$, an agent is in state $x_t\in \mathcal{X}$, takes action $a_t\in\mathcal{A}$, receives a reward $r_t = r(x_t,a_t) \in \mathbb{R}$ and transitions to a next state $x_{t+1}\sim p(\cdot\mid x_t,a_t)\in \mathcal{X}$. A policy $\pi(a\mid x):\mathcal{X}\mapsto \mathcal{P}(\mathcal{A})$ defines a map from state to distributions over actions. The standard objective of \gls{RL} is to maximize the expected cumulative discounted returns $J(\pi) \coloneqq \mathbb{E}_\pi[\sum_{t\geq 0}\gamma^t r_t]$ with a discount factor $\gamma\in (0,1)$.

Let $Q^\pi(x,a)$ denote the Q-function under policy $\pi$ and $Q^\pi\in \mathbb{R}^{|\mathcal{X}|\times |\mathcal{A}|}$ its vector form. Denote the Bellman operator as $\mathcal{T}^\pi$ and optimality operator as $\mathcal{T}^\ast$ \citep{bellman1957markovian}. Let $\pi^\ast$ be the optimal policy, i.e. $\pi^\ast = \arg\max_\pi J(\pi)$. It follows that $Q^\pi,Q^{\pi^\ast}$ are the unique fixed points of $\mathcal{T}^\pi,\mathcal{T}^\ast$ respectively \citep{sutton1999}. Popular \gls{RL} algorithms are primarily motivated by the fixed point properties of the Q-functions (or value functions): in general, given a parameterized Q-function $Q_\theta(x,a)$, the algorithms proceed by minimizing an empirical Bellman error loss $\min_\theta \mathbb{E}_{(x,a)}[(Q_\theta(x,a) - \mathcal{T} Q_\theta(x,a))^2]$ with operator $\mathcal{T}$. Algorithms differ in the distribution over sampled $(x,a)$ and the operator $\mathcal{T}$. For example, Q-learning sets the operator $\mathcal{T} = \mathcal{T}^\ast$ for value iteration and the samples $(x,a)$ come from an experience replay buffer \citep{mnih2013}; Actor-critic algorithms set the operator $\mathcal{T}=\mathcal{T}^{\pi_\theta}$ for policy iteration and iteratively update the policy $\pi_\theta$ for improvement, the data $(x,a)$ could be either on-policy or off-policy \citep{lillicrap2015continuous,schulman2015,wang2016,schulman2017}.

\subsection{Elements of trade-offs in Off-policy Reinforcement Learning}
Here we introduce elements essential to characterizing the trade-offs of generic operators $\mathcal{T}$ in off-policy \gls{RL}. For a complete review, please see \citep{rowland2019adaptive}. Take off-policy evaluation as an example: the data are generated under a behavior policy $\mu$ while the target is to evaluate $Q^\pi$. Consider a generic operator $\mathcal{T}$ and assume that it has fixed point $\tilde{Q}$. Define the contraction rate of the operator as $\Gamma(\mathcal{T}) \coloneqq \sup_{Q_1\neq Q_2} \;\|\; \mathcal{T}(Q_1-Q_2)\;\|\; _\infty / \;\|\; Q_1-Q_2\;\|\; _\infty$. Intuitively, operators with small contraction rate should have \emph{fast} contractions to the fixed point. In practical algorithms, the quantity $\mathcal{T}Q(x,a)$ is approximated via stochastic estimations, denoted as $\tilde{\mathcal{T}}Q(x,a)$. All the above allows us to define the bias and variance of an operator $\mathbb{B}(\mathcal{T}) \coloneqq \;\|\; \tilde{Q} - Q^\pi\;\|\; _2^2, \mathbb{V}(\mathcal{T})\coloneqq \mathbb{E}_\mu [\;\|\; \tilde{\mathcal{T}}Q - \mathcal{T} Q \;\|\; _2^2]$ evaluated at a Q-function $Q$. Note that all these quantities depend on the underlying \gls{MDP} $M$, though when the context is clear we omit the notation dependency.

Ideally, we seek an operator $\mathcal{T}$ with small bias, small variance and small contraction rate. However, it follows that these three aspects could not be optimized simultaneously for a general class of \gls{MDP}s $M \in \mathcal{M}$
\begin{align}
\sup_{M\in \mathcal{M}} \{ \mathbb{B}(\mathcal{T}) + \sqrt{\mathbb{V}(\mathcal{T})} + \frac{2 r_{\text{max}}}{1-\gamma} \Gamma(\mathcal{T}) \} \geq I(\mathcal{M}), \label{eq:trade-off}    
\end{align}
where $r_{\text{max}} \coloneqq \max_{x,a} r(x,a)$ and $I(\mathcal{M})$ is a information-theoretic lower bound \citep{rowland2019adaptive}. This inequality characterizes the fundamental trade-offs of these three quantities in off-policy learning. Importantly, we note that though the variance $\mathbb{V}({\mathcal{T}})$ is part of the trade-off, it is often not a major focus of algorithmic designs \citep{hessel2018rainbow,rowland2019adaptive}. We speculate it is partly because in practice the variance could be reduced via e.g. large training batch sizes, while the bias and contraction rates do not improve with similar techniques. As a result, henceforth we focus on the trade-off between the bias and contraction rate.

\subsection{Trading off bias and contraction rate}
Off-policy operators with unbiased fixed point $\mathbb{B}(\mathcal{T}) = 0$ are usually more conducive to theoretical analysis \citep{munos2016safe,rowland2019adaptive}. For example, Retrace operators $\mathcal{R}_c^{\pi,\mu}$ are a family of off-policy evaluation operators indexed by trace coefficients $c(x,a)$. When $c(x,a) \leq \pi(a\mid x) / \mu(a\mid x)$, these operators are unbiased in that $\mathcal{R}_c^{\pi,\mu} Q^\pi = Q^\pi$, resulting in $\mathbb{B}(\mathcal{R}_c^{\pi,\mu})=0$. One popular choice is $c(x,a) = \min\{\bar{c},\pi(a\mid x) / \mu(a\mid x)\}$ such that the operator also controls variance $\mathbb{V}(\mathcal{R}_c^{\pi,\mu})$ \citep{munos2016safe} with $\bar{c}$.

However, many prior empirical results suggest that bias is not a major bottleneck in practice. For example, uncorrected $n$-step update is a popular technique which greatly improves DQN  \citep{mnih2013} where the \gls{RL} agent applies the operator $\mathcal{T_\text{nstep}^{\pi,\mu}} \coloneqq (\mathcal{T}^\mu)^{n-1}\mathcal{T}^{\pi}$ where $\pi,\mu$ are target and behavior policies respectively \citep{barth2018distributed,kapturowski2018recurrent}. Note that since $\mathcal{T_\text{nstep}^{\pi,\mu}}Q^\pi\neq Q^\pi$, the $n$-step operator is biased $\mathbb{B}(\mathcal{T_\text{nstep}^{\pi,\mu}}) > 0$ \citep{rowland2019adaptive}. However, its contraction rate is small due to uncorrected updates $\Gamma(\mathcal{T_\text{nstep}^{\pi,\mu}}) \leq \gamma^{n}$. On the other hand, though Retrace operators have unbiased fixed point, its contraction rates are typically high due to small \gls{IS}, which \emph{cut off} the signals early and fail to bootstrap with long horizons. The relative importance of contraction rate over bias is confirmed through the empirical observations that $n$-step often performs significantly better than Retrace in challenging domains \citep{kapturowski2018recurrent,rowland2019adaptive}. Such observations also motivate trading off bias and contraction rates in an adaptive way \citep{rowland2019adaptive}.



\subsection{Self-imitation Learning}

\paragraph{Maximum entropy \gls{RL}.} \gls{SIL} is established under the framework of maximum-entropy \gls{RL} \citep{ziebart2010,fox2015taming,asadi2017,nachum2017bridging,haarnoja2018soft}, where the reward is augmented by an entropy term $r_{\text{ent}}(x,a) \coloneqq r(x,a) + c\mathcal{H}^\pi(x)$ and $\mathcal{H}^\pi(x)$ is the entropy of policy $\pi$ at state $x$, weighted by a constant $c > 0$. Accordingly, 
the Q-function is $Q_{\text{ent}}^\pi(x_0,a_0) \coloneqq \mathbb{E}_\pi[r_0 + \sum_{t\geq 1}^\infty \gamma^t (r_t + c \mathcal{H}^\pi(x_t))]$. The maximum-entropy \gls{RL} objective is $J_{\text{ent}}(\pi) \coloneqq \mathbb{E}_\pi [\sum_{t\geq 0}^\infty \gamma^t (r_t + c \mathcal{H}^\pi(x_t)]$. Similar to standard \gls{RL}, we denote the optimal policy $\pi_{\text{ent}}^\ast = \arg\max_\pi J_{\text{ent}}(\pi)$ and its Q-function $Q_{\text{ent}}^{\pi_{\text{ent}}^\ast}(x,a)$. 

\paragraph{Lower bound Q-learning.} Lower bound Q-learning is motivated by the following inequality \citep{oh2018self}, 
\begin{align}
    Q_{\text{ent}}^{\pi_{\text{ent}}^\ast}(x,a) \geq Q_{\text{ent}}^{\mu}(x,a) = \mathbb{E}_\mu[r_0 + \sum_{t\geq 1}^\infty \gamma^t (r_t + c \mathcal{H}^\mu(x_t))],
    \label{eq:sil-lowerbound}
\end{align}
where $\mu$ is an arbitrary behavior policy. Lower bound Q-learning optimizes the following objective with the parameterized Q-function $Q_\theta(x,a)$,
\begin{align}
    \min_\theta \mathbb{E}_\mathcal{D} [([Q^\mu(x,a) - Q_\theta(x,a)]_+)^2], 
    \label{eq:sil-q}
\end{align}
where $[x]_+ \coloneqq \max(x,0)$. The intuition of Eqn.(\ref{eq:sil-q}) is that the Q-function $Q_\theta(x,a)$ obtains learning signals from all trajectories such that $Q_{\text{ent}}^\mu(x,a) > Q_\theta(x,a) \approx Q^{\pi_\theta}(x,a)$, i.e. trajectories which perform better than the current policy $\pi_\theta$. In practice $Q_{\text{ent}}^\mu(x,a)$ could be estimated via a single trajectory $Q_{\text{ent}}^{\mu}(x,a) \approx \tilde{R}^\mu(x,a) \coloneqq r_0 + \sum_{t\geq 1}^\infty \gamma^t (r_t + c \mathcal{H}^\mu(x_t))$. Though in Eqn.(\ref{eq:sil-q}) one could plug in $\hat{R}^\mu(x,a)$ in place of $Q^\mu(x,a)$  \citep{oh2018self,guo2019efficient}, this introduces bias due to the double-sample issue \citep{baird1995residual}, especially when $\hat{R}^\mu(x,a)$ has high variance either due to the dynamics or a stochastic policy.

\paragraph{\gls{SIL} with stochastic actor-critic.}
\gls{SIL} further focuses on actor-critic algorithms where the Q-function is parameterized by a value-function and a stochastic policy $Q_\theta(x,a) \coloneqq V_\theta(x) + c \log \pi_\theta(a\mid x)$. Taking gradients of the loss in Eqn.(\ref{eq:sil-q}) with respect to $\theta$ yields the following loss function of the value-function and policy. The full \gls{SIL} loss is $L_{\text{sil}}(\theta) = L_{\text{value}}(\theta) + L_{\text{policy}}(\theta)$.
\begin{align}
    L_{\text{value}}(\theta) = \frac{1}{2} ([\hat{R}^\mu(x,a) - V_\theta(x)]_+)^2, L_{\text{policy}}(\theta) = -\log \pi_\theta(a\mid x) [\tilde{R}^\mu(x,a) - V_\theta(x)]_+. \label{eq:sil-ac}
\end{align}

\section{Generalized Self-Imitation Learning}
\label{sec:sil}

\subsection{Generalized Lower Bounds for Optimal Q-functions}

To generalize the formulation of \gls{SIL}, we seek to provide generalized lower bounds for the optimal Q-functions. Practical lower bounds should possess several desiderata: \textbf{(P.1)} they could be estimated using off-policy partial trajectories; \textbf{(P.2)} they could bootstrap from learned Q-functions. 

In standard actor-critic algorithms, partial trajectories are generated via behavior policy $\mu$ (for example, see \citep{mnih2016,schulman2017,espeholt2018impala}), and the algorithm maintains an estimate of Q-functions for the current policy $\pi$. The following theorem states a general lower bound  for the max-entropy optimal Q-function  $Q_{\text{ent}}^{\pi_{\text{ent}}^\ast}$. Additional results on generalized lower bounds of the optimal value function $V^{\pi^\ast}$ could be similarly derived, and we leave its details in Theorem \ref{thm:v-bound}  in Appendix \ref{appendix:theoretical}.
\begin{theorem} 
\label{thm:maxent-gen-lowerbound}
(proof in Appendix \ref{appendix:lowerbound})
Let $\pi_{\text{ent}}^\ast$ be the optimal policy and $Q_{\text{ent}}^{\pi_{\text{ent}}^\ast}$ its Q-function under maximum entropy \gls{RL} formulation. Given a partial trajectory $(x_t,a_t)_{t=0}^{n}$, the following inequality holds for any $n$,
\begin{align}
    Q_{\text{ent}}^{\pi_{\text{ent}}^\ast}(x_0,a_0) \geq L_{\text{ent}}^{\pi,\mu,n}(x_0,a_0) \coloneqq \mathbb{E}_\mu[r_0 + \gamma c \mathcal{H}^\mu(x_1) + \sum_{t=1}^{n-1} \gamma^t (r_t + c \mathcal{H}^\mu(x_{t+1})) + \gamma^n Q_{\text{ent}}^\pi(x_n,a_n)]
    \label{eq:maxentrl-nstep-lowerbound}
\end{align}
\end{theorem}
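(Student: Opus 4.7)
The plan is to use a hybrid-policy argument, pairing the global optimality of $\pi_{\text{ent}}^\ast$ with an unfolding of the Bellman-type recursion $n$ steps deep. Intuitively, since $\pi_{\text{ent}}^\ast$ is best among all policies, it also beats any policy that mimics $\mu$ for a while before handing control to $\pi$; the generalized bound is exactly the value of such a constructed competitor.

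Concretely, I would first define a non-stationary behavior policy $\tilde\pi$: given $(x_0, a_0)$, sample $a_1, \ldots, a_n$ according to $\mu$ and then follow $\pi$ at every subsequent step. Because $\pi_{\text{ent}}^\ast$ maximizes the entropy-regularized return over all (possibly non-stationary) policies, we obtain the one-line comparison
$$Q_{\text{ent}}^{\pi_{\text{ent}}^\ast}(x_0, a_0) \geq Q_{\text{ent}}^{\tilde\pi}(x_0, a_0).$$

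Next I would expand $Q_{\text{ent}}^{\tilde\pi}(x_0, a_0)$ directly from the definition. Since $\tilde\pi$ coincides with $\mu$ at time steps $1, \ldots, n$ and with $\pi$ from $n+1$ onward, the entropy bonus $\mathcal{H}^{\tilde\pi}(x_t)$ equals $\mathcal{H}^\mu(x_t)$ on the first block and $\mathcal{H}^\pi(x_t)$ on the second block. Splitting the infinite sum at time $n$ and applying the tower property conditioned on $(x_n, a_n)$, the tail from step $n+1$ onward collapses to $\gamma^n Q_{\text{ent}}^\pi(x_n, a_n)$ by the very definition of $Q_{\text{ent}}^\pi$, which already omits the entropy at its base state. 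Collecting what remains gives an expression of the shape
$$Q_{\text{ent}}^{\tilde\pi}(x_0, a_0) = \mathbb{E}_\mu\Bigl[r_0 + \sum_{t=1}^{n-1}\gamma^t\bigl(r_t + c\mathcal{H}^\mu(x_t)\bigr) + \gamma^n c\mathcal{H}^\mu(x_n) + \gamma^n Q_{\text{ent}}^\pi(x_n, a_n)\Bigr],$$
which, together with the preceding inequality and a re-indexing of the summation, yields $L_{\text{ent}}^{\pi,\mu,n}(x_0, a_0)$.

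The two conceptual ingredients are only (i) global optimality of $\pi_{\text{ent}}^\ast$ over non-stationary policies, and (ii) the tower property used to fold the tail past step $n$ into the bootstrap $Q_{\text{ent}}^\pi(x_n, a_n)$. The main obstacle I anticipate is really just careful bookkeeping at the boundary step $n$: one must track whether the entropy at $x_n$ is charged against $\mu$ or $\pi$, and how the discount exponents on the entropy terms line up with the stated formula after reindexing. No additional structural assumptions on the MDP or on the reference policy $\pi$ are required.
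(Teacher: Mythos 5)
Your proposal is correct in substance but takes a genuinely different route from the paper's. The paper proceeds by induction on the horizon: the base case combines the soft Bellman equation with the facts that $\pi_{\text{ent}}^\ast$ maximizes $c\mathcal{H}^\rho(x)+\mathbb{E}_{a\sim\rho}[Q_{\text{ent}}^{\pi_{\text{ent}}^\ast}(x,a)]$ over policies $\rho$ and that $Q_{\text{ent}}^{\pi_{\text{ent}}^\ast}\geq Q_{\text{ent}}^\pi$ pointwise, and the inductive step applies the hypothesis at $(x_1,a_1)$ before invoking the base case. You instead observe that $L_{\text{ent}}^{\pi,\mu,n}(x_0,a_0)$ is \emph{exactly} $Q_{\text{ent}}^{\tilde\pi}(x_0,a_0)$ for the non-stationary competitor $\tilde\pi$ that plays $\mu$ for $n$ steps and then switches to $\pi$, and reduce the theorem to dominance of the soft-optimal policy over that competitor. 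This is cleaner and more explanatory (the bound is the value of a feasible policy, so of course the optimum beats it), but note where the work is hiding: the paper only defines $\pi_{\text{ent}}^\ast$ as the maximizer over stationary policies, and its dominance over history-dependent or non-stationary policies in the entropy-regularized setting is itself a lemma that is typically proved by essentially the same backward induction the paper carries out; your argument is therefore not shorter once that lemma is unpacked, though it is a standard and citable fact. Two bookkeeping remarks. First, your telescoping identity produces entropy terms weighted as $\gamma^{t+1}c\mathcal{H}^\mu(x_{t+1})$, whereas the theorem as printed writes $\gamma^{t}c\mathcal{H}^\mu(x_{t+1})$; your version is the one consistent with the paper's own soft Bellman equation $Q_{\text{ent}}^\pi(x_0,a_0)=\mathbb{E}_\pi[r_0+\gamma c\mathcal{H}^\pi(x_1)+\gamma Q_{\text{ent}}^\pi(x_1,a_1)]$, so the mismatch appears to be a typo in the statement rather than an error on your part, but you should say so explicitly rather than hoping the reindexing "lines up." Second, you handle the boundary correctly: the entropy at $x_n$ is charged to $\mu$ and $a_n\sim\mu(\cdot\mid x_n)$, which matches the convention that $Q_{\text{ent}}^\pi(x_n,a_n)$ omits the entropy bonus at its own base state.
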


By letting $c=0$, we derive a generalized lower bound for the standard optimal Q-function $Q^{\pi^\ast}$
\begin{lemma} 
\label{lemma:gen-lowerbound}
Let $\pi^\ast$ be the optimal policy and $Q^{\pi^\ast}$ its Q-function under standard \gls{RL}. Given a partial trajectory $(x_t,a_t)_{t=0}^{n}$, the following inequality holds for any $n$,
\begin{align}
    Q^{\pi^\ast}(x_0,a_0) \geq L^{\pi,\mu,n}(x_0,a_0) \coloneqq \mathbb{E}_\mu[\sum_{t=0}^{n-1} \gamma^t r_t + \gamma^n Q^\pi(x_n,a_n)].
    \label{eq:rl-nstep-lowerbound}
\end{align}
\end{lemma}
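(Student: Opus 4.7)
The plan is to derive Lemma~\ref{lemma:gen-lowerbound} directly as a corollary of Theorem~\ref{thm:maxent-gen-lowerbound} by specializing the entropy coefficient to $c=0$. When $c=0$, the augmented reward $r_{\text{ent}}(x,a) = r(x,a) + c\mathcal{H}^\pi(x)$ reduces to $r(x,a)$, so $Q_{\text{ent}}^\pi$ collapses to $Q^\pi$ for any policy $\pi$, and the max-entropy objective $J_{\text{ent}}$ coincides with the standard return $J$, so any maximizer $\pi_{\text{ent}}^\ast$ is also a maximizer of $J$ and $Q_{\text{ent}}^{\pi_{\text{ent}}^\ast} = Q^{\pi^\ast}$. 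Setting $c=0$ in the right-hand side of Eqn.~(\ref{eq:maxentrl-nstep-lowerbound}) kills every term of the form $c\,\mathcal{H}^\mu(\cdot)$, leaving exactly $\mathbb{E}_\mu[\sum_{t=0}^{n-1}\gamma^t r_t + \gamma^n Q^\pi(x_n,a_n)] = L^{\pi,\mu,n}(x_0,a_0)$, which is the desired bound.

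For a self-contained alternative, I would give a direct proof using a policy-improvement-style argument. Introduce the (non-stationary) policy $\pi'$ that takes the prescribed action $a_0$ at time $0$, then samples actions from $\mu$ for the next $n$ steps, and from time $n+1$ onwards follows $\pi$. By construction, the discounted return of $\pi'$ starting from $(x_0,a_0)$ equals $L^{\pi,\mu,n}(x_0,a_0)$, since the first $n$ reward contributions are rolled out under $\mu$ and the tail from $(x_n,a_n)$ contributes $\gamma^n Q^\pi(x_n,a_n)$ in expectation. Since $\pi^\ast$ is optimal among all (possibly non-stationary) policies in a stationary discounted MDP, $Q^{\pi^\ast}(x_0,a_0) \geq Q^{\pi'}(x_0,a_0) = L^{\pi,\mu,n}(x_0,a_0)$. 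Equivalently, one can prove this by induction on $n$: the base case $n=1$ uses $V^{\pi^\ast}(x_1) = \max_{a'} Q^{\pi^\ast}(x_1,a') \geq Q^\pi(x_1,a_1)$ for any $a_1$ and $\pi$, and the inductive step applies the Bellman optimality equation for $Q^{\pi^\ast}$ once and invokes the inductive hypothesis at state $x_1$.

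The only subtlety is bookkeeping around the bootstrap term: one must verify that the convention in $L^{\pi,\mu,n}$ treats $(x_n,a_n)$ with $a_n$ sampled from $\mu$, and that $\gamma^n Q^\pi(x_n,a_n)$ correctly represents the discounted tail of the mixed policy from time $n$ onwards. Beyond this indexing check, the argument is routine, so I do not expect a genuine obstacle—both approaches amount to noting that $\pi^\ast$ dominates any mixed rollout policy on an MDP with a fixed reward function.
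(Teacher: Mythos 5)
Your primary argument---specializing Theorem~\ref{thm:maxent-gen-lowerbound} to $c=0$ so that all entropy terms vanish and $Q_{\text{ent}}^{\pi}$ collapses to $Q^{\pi}$---is exactly how the paper obtains Lemma~\ref{lemma:gen-lowerbound}, and it is correct. Your self-contained alternative (the mixed-rollout policy dominated by $\pi^\ast$, or equivalently the induction on $n$) is also sound and mirrors the induction the paper uses for the value-function analogue in Theorem~\ref{thm:v-bound}, but it is not needed beyond the corollary argument.
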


We see that $n$-step lower bounds $L_{\text{ent}}^{\pi,\mu,n}$ satisfy both desiderata (\textbf{P.1})(\textbf{P.2}): $L_{\text{ent}}^{\pi,\mu,n}$ could be estimated on a single trajectory and bootstraps from learned Q-function $Q_\theta(x,a) \approx Q^\pi(x,a)$. When $n\rightarrow \infty$, $L_{\text{ent}}^{\pi,\mu,n} \rightarrow Q^\mu$ and we arrive at the lower bound employed by the original \gls{SIL} \citep{oh2018self}.  The original \gls{SIL} does not satisfy (\textbf{P.1})(\textbf{P.2}): the estimate of $Q^\mu$ requires full trajectories from finished episodes and does not bootstrap from learned Q-functions. In addition, because the lower bound $L_{\text{ent}}^{\pi,\mu}(x,a)$ bootstraps Q-functions at a finite step $n$, we expect it to partially mitigate the double-sample bias of $\hat{R}^\mu(x,a)$. Also, as the policy $\pi$ improves over time, the Q-function $Q^\pi(x,a)$ increases and the bound $L^{\pi,\mu,n}$ improves as well. On the contrary, the standard \gls{SIL} does not enjoy such advantages.

\subsection{Generalized Self-Imitation Learning}

\paragraph{Generalized \gls{SIL} with stochastic actor-critic.}
We describe the generalized \gls{SIL} for actor-critic algorithms. As developed in Section 2.3, such algorithms maintain a parameterized \emph{stochastic} policy $\pi_\theta(a\mid x)$ and value-function $V_\theta(x)$. Let $\hat{L}_{\text{ent}}^{\pi,\mu,n}(x,a)$ denote the sample estimate of the $n$-step lower bound, the loss functions are
\begin{align}
    L^{(n)}_{\text{value}}(\theta) = \frac{1}{2} ([\hat{L}_{\text{ent}}^{\pi,\mu,n}(x,a) - V_\theta(x)]_+)^2, L^{(n)}_{\text{policy}}(\theta) = -\log \pi_\theta(a\mid x) [\hat{L}_{\text{ent}}^{\pi,\mu,n}(x,a) - V_\theta(x)]_+. \label{eq:gen-sil-ac}
\end{align}

Note that the loss functions in Eqn.(\ref{eq:gen-sil-ac}) introduce updates very similar to A2C \citep{mnih2016}. Indeed, when removing the threshold function $[x]_+$ and setting the data distribution to be on-policy $\mu=\pi$, we recover the $n$-step A2C objective. 

\paragraph{Generalized \gls{SIL} with deterministic actor-critic.}
For continuous control, \gls{TD}-learning and deterministic policy gradients have proven highly sample efficient and high-performing \citep{lillicrap2015continuous,fujimoto2018addressing,haarnoja2018soft}. By construction, the generalized $n$-step lower bounds $L_{
\text{ent}}^{\pi,\mu,n}$ adopts $n$-step \gls{TD}-learning and should naturally benefit the aforementioned algorithms. Such algorithms maintain a parameterized Q-function $Q_\theta(x,a)$, which could be directly updated via the following loss
\begin{align}
    L_{\text{qvalue}}^{(n)}(\theta) = \frac{1}{2} ([\hat{L}_{\text{ent}}^{\pi,\mu,n}(x,a) - Q_\theta(x,a)]_+)^2.\label{eq:gen-sil-det-ac}
\end{align}


Interestingly, note that the above update Eqn.(\ref{eq:gen-sil-det-ac}) is similar to $n$-step Q-learning update \citep{hessel2018rainbow,barth2018distributed} up to the threshold function $[x]_+$. In Section \ref{sec:tradeoff}, we will discuss their formal connections in details. 

\paragraph{Prioritized experience replay.} Prior work on prioritized experience replay \citep{schaul2016,horgan2018distributed} proposed to sample tuples $(x_t,a_t,r_t)$ from replay buffer $\mathcal{D}$ with probability proportional to Bellman errors. We provide a straightforward extension by sampling proportional to the lower bound loss $[\hat{L}_{\text{ent}}^{\pi,\mu,n}(x,a) - Q_\theta(x,a)]_+$. This reduces to the sampling scheme in \gls{SIL} \citep{oh2018self} when letting $n\rightarrow \infty$.

\section{Trade-offs with Lower Bound Q-learning}
\label{sec:tradeoff}
When applying \gls{SIL} in practice, its induced loss functions are optimized jointly with the base loss functions \citep{oh2018self}: in the case of stochastic actor-critic, the full loss function is $L(\theta) \coloneqq L_{\text{ac}}(\theta) + L_\text{sil}(\theta)$, where $L_{\text{ac}}(\theta)$ is the original actor-critic loss function \citep{mnih2016}. The parameter is then updated via the gradient descent step $\theta = \theta - \nabla_\theta L(\theta)$. This makes it difficult to analyze the behavior of \gls{SIL} beyond the plain motivation of Q-function lower bounds. Though a  comprehensive analysis of \gls{SIL} might be elusive due to its empirical nature, we formalize the lower bound arguments via \gls{RL} operators and draw connections with  $n$-step Q-learning. Below, we present results for standard \gls{RL}.

\subsection{Operators for Generalized Lower Bound Q-learning}
First, we formalize the mathematical operator of \gls{SIL}. Let $Q \in \mathbb{R}^{|\mathcal{X}|\times|\mathcal{A}|}$ be a vector-valued Q-function. Given some behavior policy $\mu$, define the operator $\mathcal{T_{\text{sil}}} Q(x,a) \coloneqq Q(x,a) + [Q^\mu(x,a)-Q(x,a)]_+$ where $[x]_+ \coloneqq \max(x,0)$. This operator captures the defining feature of the practical lower bound Q-learning \citep{oh2018self}, where the Q-function $Q(x,a)$ receives learning signals only when $Q^\mu(x,a) > Q(x,a)$. For generalized \gls{SIL}, we similarly define $\mathcal{T_{\text{n,sil}}} Q(x,a) \coloneqq Q(x,a) + [(\mathcal{T}^\mu)^{n-1}\mathcal{T}^\pi Q(x,a) - Q(x,a) ]_+$, where $Q(x,a)$ is updated when $(\mathcal{T}^\mu)^{n-1} \mathcal{T}^\pi Q(x,a) > Q(x,a)$ as suggested in Eqn.(\ref{eq:gen-sil-ac},\ref{eq:gen-sil-det-ac}).

In practice, lower bound Q-learning is applied alongside other main iterative algorithms. Henceforth, we focus on policy iteration algorithms with the Bellman operator $\mathcal{T}^\pi$ along with its $n$-step variant  $(\mathcal{T}^{\mu})^{n-1} \mathcal{T}^\pi$. Though practical deep \gls{RL} implementations adopt additive loss functions, for theoretical analysis we consider a convex combination of these three operators, with coefficients $\alpha,\beta \in [0,1]$.
\begin{align}
    \mathcal{T_{\text{n,sil}}^{\alpha,\beta}} \coloneqq (1-\beta)\mathcal{T}^\pi + (1-\alpha)\beta \mathcal{T_{\text{n,sil}}} + \alpha\beta (\mathcal{T}^{\mu})^{n-1} \mathcal{T}^\pi
    \label{eq:general-operator}
\end{align}

\subsection{Properties of the operators}
\begin{theorem} 
\label{thm:sil-nstep-operator}
(proof in Appendix \ref{appendix:operator})
Let $\pi,\mu$ be target and behavior policy respectively. Then the following results hold:
\begin{itemize}[topsep=0pt,parsep=0pt,partopsep=0pt,leftmargin=*]
    \item \textbf{Contraction rate.} $\Gamma(\mathcal{T_{\text{n,sil}}^{\alpha,\beta}}) \leq (1-\beta)\gamma+(1-\alpha)\beta+\alpha\beta\gamma^n$. The operator is always contractive for $\alpha \in [0,1], \beta \in [0,1)$. When $\alpha > \frac{1-\gamma}{1-\gamma^n}$, we have for any $\beta \in (0,1)$, $\Gamma(\mathcal{T_{\text{n,sil}}^{\alpha,\beta}}) \leq \gamma^\prime < \gamma$ for some $\gamma^\prime$. 
    \item \textbf{Fixed point bias.} $\mathcal{T_{\text{n,sil}}^{\alpha,\beta}}$ has a unique fixed point $\tilde{Q}^{\alpha,\beta}$ for any $\alpha \in [0,1], \beta \in [0,1)$ such that $(1-\alpha)\beta < 1$. This fixed point satisfies the bounds $Q^{\eta\pi + (1-\eta)\mu^{n-1}\pi} \leq \tilde{Q}^{\alpha,\beta} \leq Q^{\pi^\ast}$, where $Q^{\eta\pi + (1-\eta)\mu^{n-1}\pi}$ is the unique fixed point of operator $\eta\mathcal{T}^\pi + (1-\eta) \mathcal{T}^{\mu^{n-1}\pi}$ with $\eta=\frac{1-\beta}{1-\beta+\alpha\beta}$.
\end{itemize}
\end{theorem}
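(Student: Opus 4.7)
The plan is to first simplify the SIL piece by observing that, since $[x]_+ = \max(x,0)$, the operator can be rewritten pointwise as $\mathcal{T_{\text{n,sil}}} Q = \max(Q,\, (\mathcal{T}^\mu)^{n-1}\mathcal{T}^\pi Q)$. This form makes two structural properties transparent that I will use throughout: monotonicity in $Q$ (a pointwise max of monotone operators stays monotone), and non-expansiveness in $\|\cdot\|_\infty$ via the elementary inequality $|\max(a,b)-\max(c,d)| \leq \max(|a-c|,|b-d|)$ applied componentwise. Both properties lift to $\mathcal{T_{\text{n,sil}}^{\alpha,\beta}}$ since it is a convex combination of three monotone operators.

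For the contraction rate I would apply the triangle inequality to Eqn.(\ref{eq:general-operator}), combining the $\gamma$-contraction of $\mathcal{T}^\pi$, the $\gamma^n$-contraction of $(\mathcal{T}^\mu)^{n-1}\mathcal{T}^\pi$, and the unit Lipschitz bound on $\mathcal{T_{\text{n,sil}}}$. This gives the claimed $(1-\beta)\gamma + (1-\alpha)\beta + \alpha\beta\gamma^n$, which a short rearrangement shows is strictly less than $1$ whenever $\beta\in[0,1)$ (the inequality reduces to $(1-\beta)(1-\gamma) > -\alpha\beta(1-\gamma^n)$) and strictly less than $\gamma$ iff $\alpha(1-\gamma^n) > 1-\gamma$, i.e.\ $\alpha > \frac{1-\gamma}{1-\gamma^n}$. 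Banach's fixed-point theorem then delivers the unique $\tilde{Q}^{\alpha,\beta}$.

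The upper bound $\tilde{Q}^{\alpha,\beta} \leq Q^{\pi^\ast}$ I would get by checking $\mathcal{T_{\text{n,sil}}^{\alpha,\beta}} Q^{\pi^\ast} \leq Q^{\pi^\ast}$ componentwise and iterating. Each constituent sends $Q^{\pi^\ast}$ below itself: $\mathcal{T}^\pi Q^{\pi^\ast} \leq \mathcal{T}^\ast Q^{\pi^\ast} = Q^{\pi^\ast}$ since any policy-average is dominated by the $\max$ over actions; iterating $\mathcal{T}^\mu$ (monotone) propagates this to $(\mathcal{T}^\mu)^{n-1}\mathcal{T}^\pi Q^{\pi^\ast} \leq Q^{\pi^\ast}$; and therefore $\mathcal{T_{\text{n,sil}}} Q^{\pi^\ast} = \max(Q^{\pi^\ast},\,(\mathcal{T}^\mu)^{n-1}\mathcal{T}^\pi Q^{\pi^\ast}) = Q^{\pi^\ast}$. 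Monotonicity of $\mathcal{T_{\text{n,sil}}^{\alpha,\beta}}$ then preserves the inequality along its orbit from $Q^{\pi^\ast}$, and contractivity sends the orbit to $\tilde{Q}^{\alpha,\beta}$.

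The lower bound is the step I expect to be most delicate, since the precise value of $\eta$ must emerge from the algebra. Using the pointwise floor $\mathcal{T_{\text{n,sil}}} Q \geq Q$ inside Eqn.(\ref{eq:general-operator}) and evaluating at $Q = \tilde{Q}^{\alpha,\beta}$ gives $\tilde{Q}^{\alpha,\beta} \geq (1-\beta)\mathcal{T}^\pi \tilde{Q}^{\alpha,\beta} + (1-\alpha)\beta\, \tilde{Q}^{\alpha,\beta} + \alpha\beta (\mathcal{T}^\mu)^{n-1}\mathcal{T}^\pi \tilde{Q}^{\alpha,\beta}$. Moving the bare $\tilde{Q}^{\alpha,\beta}$ term to the left and dividing by the positive scalar $1-\beta+\alpha\beta$ yields $\tilde{Q}^{\alpha,\beta} \geq \mathcal{T}'' \tilde{Q}^{\alpha,\beta}$ with $\mathcal{T}'' = \eta\mathcal{T}^\pi + (1-\eta)(\mathcal{T}^\mu)^{n-1}\mathcal{T}^\pi$ and exactly $\eta = \frac{1-\beta}{1-\beta+\alpha\beta}$. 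Since $\mathcal{T}''$ is monotone and a $(\eta\gamma+(1-\eta)\gamma^n)$-contraction, iterating $\tilde{Q}^{\alpha,\beta} \geq \mathcal{T}'' \tilde{Q}^{\alpha,\beta} \geq (\mathcal{T}'')^2 \tilde{Q}^{\alpha,\beta} \geq \ldots$ and passing to the limit gives $\tilde{Q}^{\alpha,\beta} \geq Q^{\eta\pi+(1-\eta)\mu^{n-1}\pi}$, the unique fixed point of $\mathcal{T}''$. The main obstacle is making sure the inequality is applied only in the SIL slot and that the normalization reproduces the claimed $\eta$ verbatim; the rest is standard operator bookkeeping.
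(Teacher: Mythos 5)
Your proposal is correct, and for two of the three claims it follows essentially the same route as the paper. The contraction bound is identical: the paper bounds $|\mathcal{T}_{\text{n,sil}}Q_1(x,a)-\mathcal{T}_{\text{n,sil}}Q_2(x,a)|$ by $\max\bigl(|Q_1(x,a)-Q_2(x,a)|,\,|\mathcal{U}Q_1(x,a)-\mathcal{U}Q_2(x,a)|\bigr)$ with $\mathcal{U}=(\mathcal{T}^\mu)^{n-1}\mathcal{T}^\pi$, which is exactly your max identity, and then applies the triangle inequality; your algebra for the thresholds $1$ and $\gamma$ matches. The lower bound is also the paper's argument: rearrange the fixed-point identity using $\mathcal{T}_{\text{n,sil}}Q\geq Q$, normalize by $1-\beta+\alpha\beta$ to obtain $\tilde{Q}^{\alpha,\beta}\geq \eta\mathcal{T}^\pi\tilde{Q}^{\alpha,\beta}+(1-\eta)\mathcal{U}\tilde{Q}^{\alpha,\beta}$ with the claimed $\eta$, and iterate the monotone contraction downward to its fixed point. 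Where you genuinely diverge is the upper bound. The paper introduces an auxiliary operator $\tilde{H}\coloneqq(1-\beta)\mathcal{T}^\ast+(1-\alpha)\beta\,\mathcal{T}_{\text{n,sil}}+\alpha\beta(\mathcal{T}^\ast)^n$, argues that its unique fixed point is $Q^{\pi^\ast}$, and approaches that fixed point from below starting at $\tilde{Q}^{\alpha,\beta}$; in doing so it invokes the inequality $(\mathcal{T}^\ast)^iQ\geq Q$ for all $Q$, which is false for arbitrary $Q$ (take $Q$ a sufficiently large constant) and is only valid at the particular points where the argument actually needs it. Your route -- verify componentwise that $\mathcal{T}^\pi Q^{\pi^\ast}\leq Q^{\pi^\ast}$ and $(\mathcal{T}^\mu)^{n-1}\mathcal{T}^\pi Q^{\pi^\ast}\leq Q^{\pi^\ast}$, conclude $\mathcal{T}_{\text{n,sil}}^{\alpha,\beta}Q^{\pi^\ast}\leq Q^{\pi^\ast}$, and iterate the monotone contraction downward from $Q^{\pi^\ast}$ to $\tilde{Q}^{\alpha,\beta}$ -- dispenses with the auxiliary operator and the problematic global inequality altogether. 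It is the cleaner and more robust of the two arguments, and it buys the same conclusion with only monotonicity and Banach iteration.
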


To highlight the connections between uncorrected $n$-step and  \gls{SIL}, we discuss two special cases.
\begin{itemize}[topsep=0pt,parsep=0pt,partopsep=0pt,leftmargin=*]
    \item When $\alpha = 1$, $\mathcal{T_{\text{n,sil}}^{\alpha,\beta}}$ removes all the lower bound components and reduces to $(1-\beta) \mathcal{T}^\pi + \beta (\mathcal{T}^\mu)^{n-1}\mathcal{T}^\pi$. This recovers the trade-off results discussed in \citep{rowland2019adaptive}: when $\beta=1$, the operator becomes uncorrected $n$-step updates with the smallest possible contraction rate $\Gamma(\mathcal{T_{\text{n,sil}}^{\alpha,\beta}}) \leq \gamma^n$, but the fixed point $\tilde{Q}^{\alpha,\beta}$ is biased. In general, there is no lower bound on the fixed point so that its value could be arbitrary depending on both  $\pi$ and $\mu$. 
    \item When $\alpha \in (\frac{1-\gamma}{1-\gamma^n},1]$, $\mathcal{T_{\text{n,sil}}^{\alpha,\beta}}$ combines the lower bound operator. Importantly, unlike uncorrected $n$-step, now the fixed point is lower bounded $\tilde{Q}^{\alpha,\beta} \geq Q^{\eta\pi + (1-\eta) \mu^{n-1}\pi}$. Because such a fixed point bias is lower bounded, we call it \emph{positive bias}. Adjusting $\alpha$ creates a trade-off between  contraction rates and the positive fixed point bias.  In addition, the fixed point bias is safe in that it is upper bounded by the optimal Q-function, $\tilde{Q}^{\alpha,\beta} \leq Q^{\pi^\ast}$, which might be a desirable property in cases where over-estimation bias hurts the practical performance \citep{van2016deep,fujimoto2018addressing}. In Section \ref{section:exp}, we will see that such positive fixed point bias is beneficial to empirical performance, as similarly observed in \citep{he2016learning,oh2018self,guo2019efficient}. Though $\mathcal{T}_{\text{n,sil}}^{\alpha,\beta}$ does not contract as fast as the uncorrected $n$-step operator $(\mathcal{T}^\mu)^{n-1}\mathcal{T}^\pi$, it still achieves a bound on contraction rates strictly smaller than $\mathcal{T}^\pi$. As such, generalized \gls{SIL} also enjoys fast contractions relative to the baseline algorithm.
    
\end{itemize}  

\begin{wrapfigure}{r}{0.3\textwidth}
    \includegraphics[width=0.3\textwidth]{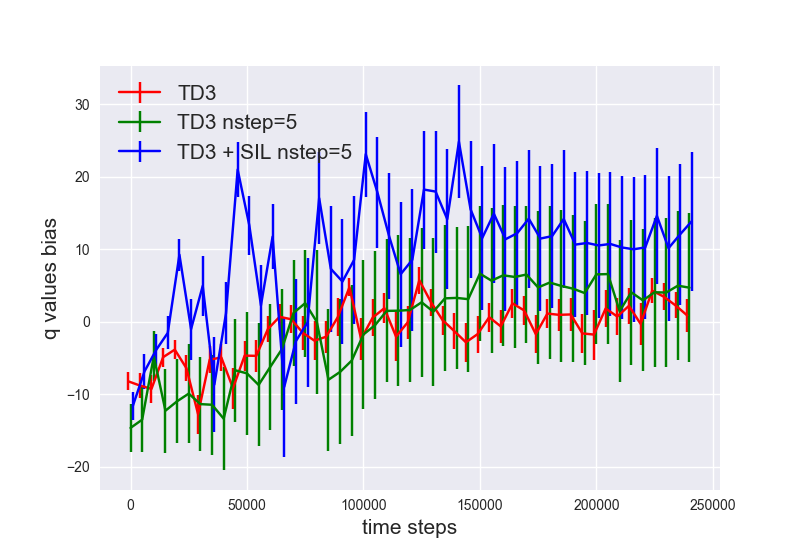}
  \caption{\small{Bias of Q-function networks with \gls{TD3} variants on the WalkerStand task.}
  \label{figure:qbias}
  }
\end{wrapfigure} 

\paragraph{Empirical evaluation of Q-function bias.} To validate the statements made in Theorem \ref{thm:sil-nstep-operator} on the bias of Q-functions, we test with \gls{TD3} for a an empirical evaluation \citep{fujimoto2018addressing}. At a given time in training, the bias at a pair $(x,a)$ is calculated as the difference between Q-function network prediction and an unbiased Monte-Carlo estimate of Q-function for the current policy $\pi$, i.e. $Q_\theta(x,a) - \hat{Q}^\pi(x,a)$\footnote{By definition, the bias should be the difference between the fixed point $\tilde{Q}$ and target $Q^\pi$. Since \gls{TD3} employs heavy replay during training, we expect the Q-function to be close to the fixed point $Q_\theta\approx \tilde{Q}$. Because both the dynamics and policy are deterministic, an one-sample estimate of Q-function is accurate enough to approximate the true Q-function $\hat{Q}^\pi=Q^\pi$. Hence here the bias is approximated by $Q_\theta-\hat{Q}^\pi$.}. Figure \ref{figure:qbias} shows the $\text{mean}\pm 0.5\text{std}$ of such bias over time, with $\text{mean}$ and $\text{std}$ computed over visited state-action pairs under $\pi$.
In general, the bias of \gls{TD3} is small, which is compatible to observations made in \citep{fujimoto2018addressing}. The bias of \gls{TD3} with uncorrected $n=5$-step spreads over a wider range near zero, indicating significant non-zero bias on both sides. For \gls{TD3} with generalized \gls{SIL} $n=5$, the bias is also spread out but the mean bias is significantly greater than zero. This implies that \gls{SIL} generally induces a positive bias in the fixed point. In summary, these observations confirm that neural network based Q-functions $Q_\theta(x,a)$ display similar biases introduced by the corresponding exact operators.






\section{Experiments}
\label{section:exp}
We seek to address the following questions in the experiments: \textbf{(1)} Does generalized \gls{SIL} entail performance gains on both deterministic and stochastic actor-critic algorithms? \textbf{(2)} How do the design choices (e.g. hyper-parameters, prioritized replay) of generalized \gls{SIL} impact its performance?

\paragraph{Benchmark tasks.} For benchmark tasks, we focus on state-based continuous control. In order to assess the strengths of different algorithmic variants, we consider similar tasks \emph{Walker}, \emph{Cheetah} and \emph{Ant} with different simulation backends from OpenAI gym \citep{brockman2016}, DeepMind Control Suite \citep{tassa2018deepmind} and Bullet Physics Engine \citep{coumans2010bullet}. These backends differ in many aspects, e.g. dimensions of observation and action space, transition dynamics and reward functions. With such a wide range of varieties, we seek to validate algorithmic gains with sufficient robustness to varying domains. There are a total of $8$ distinct simulated control tasks, with details in Appendix \ref{appendix:exp}.

\subsection{Deterministic actor-critic}

\paragraph{Baselines.} We choose \gls{TD3} \citep{fujimoto2018addressing} as the baseline algorithm which employs a deterministic actor $\pi_\phi(x)$. \gls{TD3} builds on \gls{DDPG} \citep{lillicrap2015continuous} and alleviates the over-estimation bias in \gls{DDPG} via delayed updates and double critics similar to double Q-learning \citep{hasselt2010double,van2016deep}. Through a comparison of \gls{DDPG} and \gls{TD3} combined with generalized \gls{SIL}, we will see that over-estimation bias makes the advantages through lower bound Q-learning much less significant. To incorporate generalized \gls{SIL} into \gls{TD3}, we adopt an additive loss function: let $L_{\text{TD3}^{(n)}}(\theta)$ be the $n$-step \gls{TD3} loss function and $L_{\text{sil}}^{(m)}(\theta)$ be the $m$-step generalized \gls{SIL} loss. The full loss is $L(\theta)\coloneqq L_{\text{TD3}}^{(n)}(\theta) + \eta L_{\text{sil}}^{(m)}(\theta)$ with some $\eta\geq 0$. We will use this general loss template to describe algorithmic variants for comparison below.

\paragraph{Return-based \gls{SIL} for \gls{TD3}.} A straightforward extension of \gls{SIL} \citep{oh2018self} and optimality tightening \citep{he2016learning} to deterministic actor-critic algorithms, is to estimate the return $\hat{R}^\mu(x_t,a_t) \coloneqq \sum_{t^\prime \geq t} \gamma^{t-t^\prime} r_{t^\prime}$ on a single trajectory $(x_t,a_t,r_t)_{t=0}^\infty$ and minimize the lower bound objective $([\hat{R}^\mu(x,a) - Q_\theta(x,a)]_+)^2$. Note that since both the policy and the transition is deterministic (for benchmarks listed above), the one-sample estimate of returns is exact in that $\hat{R}^\mu(x,a) \equiv R^\mu(x,a) \equiv Q^\mu(x,a)$. In this case, return-based \gls{SIL} is exactly equivalent to generalized \gls{SIL} with $n\rightarrow\infty$.

\paragraph{Evaluations.} We provide evaluations on a few standard benchmark tasks in Figure \ref{figure:td3} as well as their variants with delayed rewards. To facilitate the credit assignment of the training performance to various components of the generalized \gls{SIL}, we compare with a few algorithmic variants: $1$-step \gls{TD3} ($n=1,\eta=0$); $5$-step \gls{TD3} ($n=5,\eta=0$); \gls{TD3} with $5$-step generalized \gls{SIL} ($n=1,\eta=0.1,m=5$); \gls{TD3} with return-based \gls{SIL} ($n=1,\eta=0.1,m=\infty$). Importantly, note that the weighting coefficient is fixed $\eta=0.1$ for all cases of generalized \gls{SIL}. The training results of selected algorithms are shown in Figure \ref{figure:td3}. We show the final performance of all baselines in Table \ref{table:summary} in Appendix \ref{appendix:exp}. 



\begin{figure}[h]
\centering
\subfigure[\textbf{DMWalkerRun}]{\includegraphics[width=.24\linewidth]{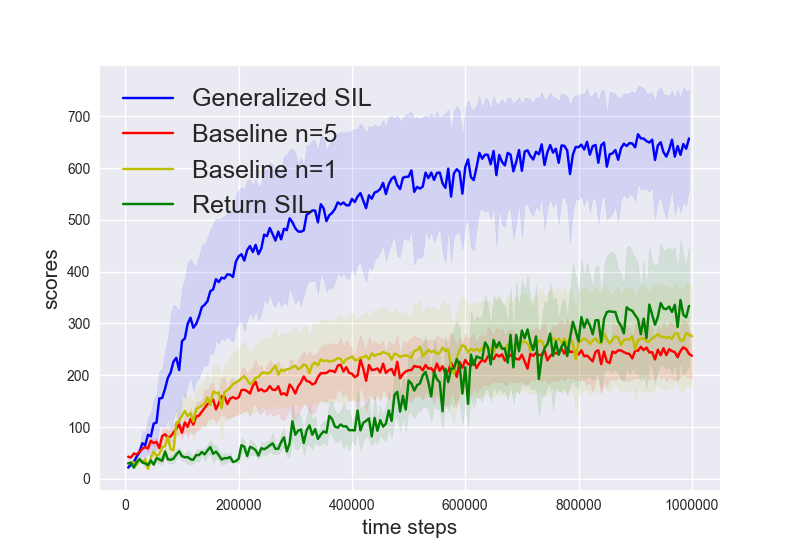}}
\subfigure[\textbf{DMWalkerStand}]{\includegraphics[width=.24\linewidth]{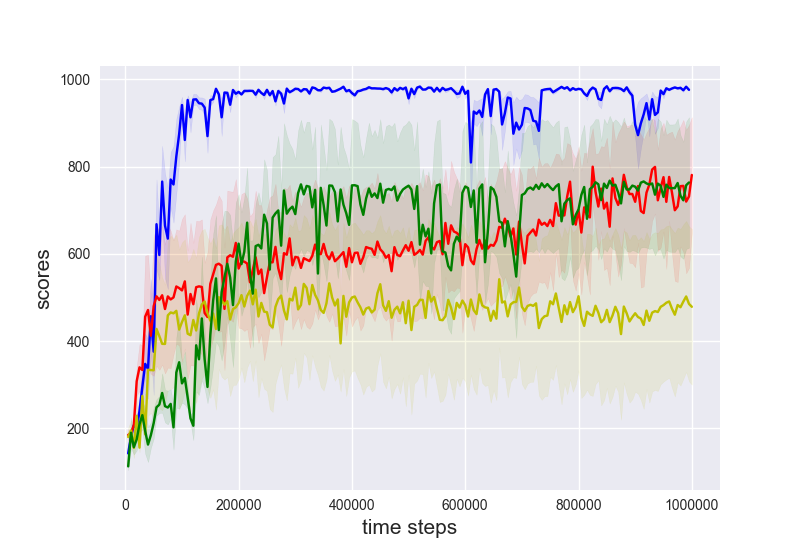}}
\subfigure[\textbf{DMWalkerWalk}]{\includegraphics[width=.24\linewidth]{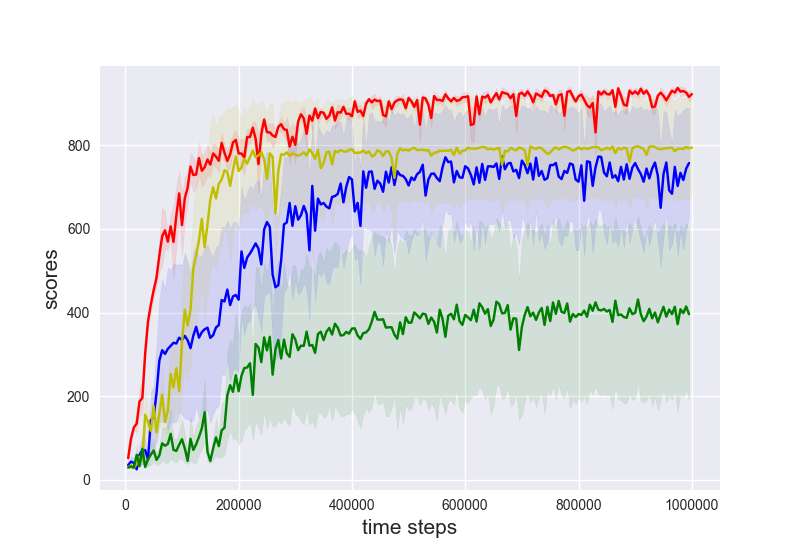}}
\subfigure[\textbf{DMCheetahRun}]{\includegraphics[width=.24\linewidth]{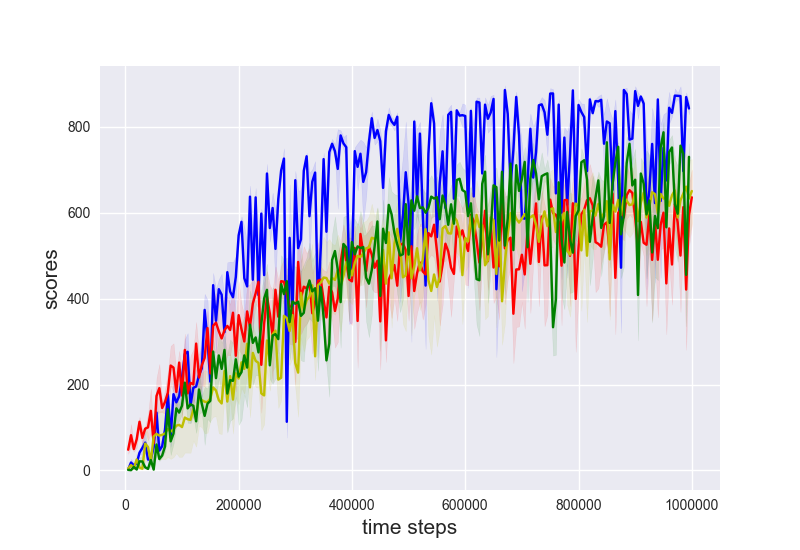}}
\subfigure[\textbf{Ant}]{\includegraphics[width=.24\linewidth]{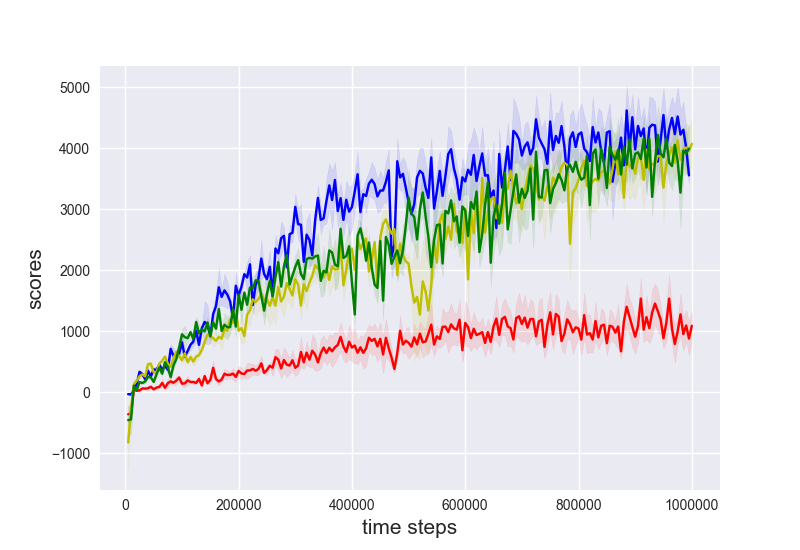}}
\subfigure[\textbf{HalfCheetah}]{\includegraphics[width=.24\linewidth]{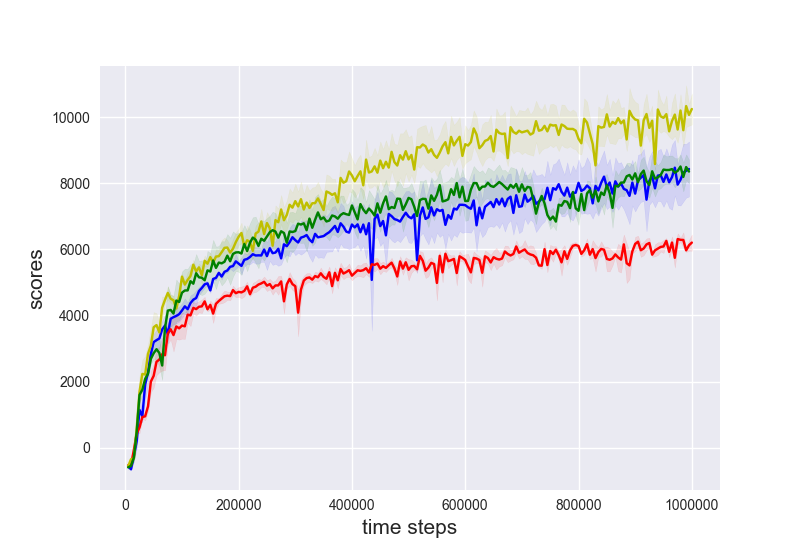}}
\subfigure[\textbf{Ant(B)}]{\includegraphics[width=.24\linewidth]{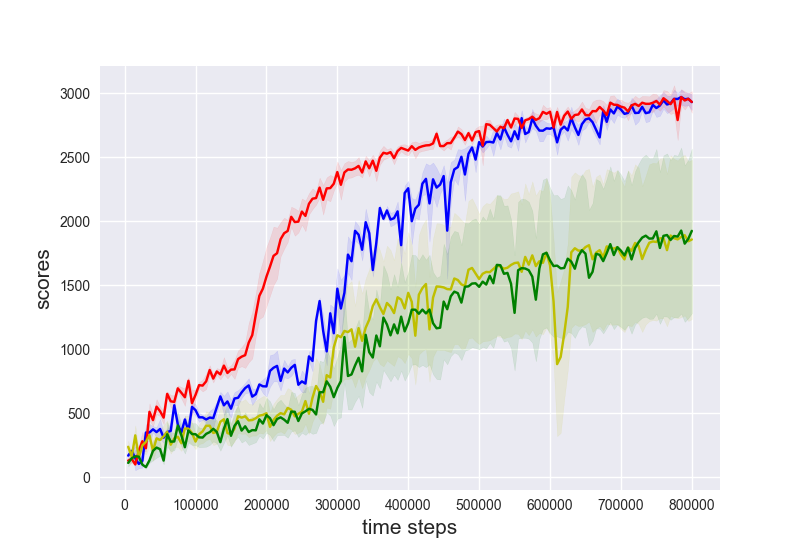}}
\subfigure[\textbf{HalfCheetah(B)}]{\includegraphics[width=.24\linewidth]{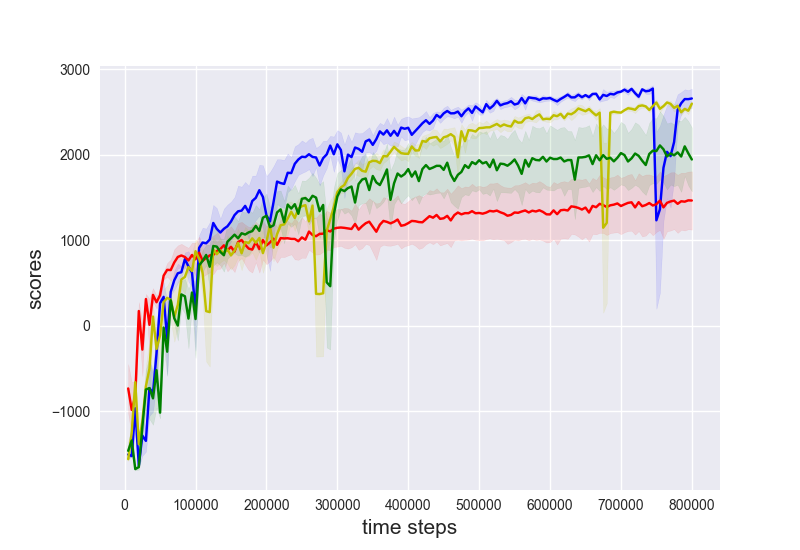}}
\caption{\small{Standard evaluations on $8$ benchmark tasks. Different colors represent different algorithmic variants. Each curve shows the $\text{mean}\pm 0.5 \text{std}$ of evaluation performance during training, averaged across $3$ random seeds. The x-axis shows the time steps and the y-axis shows the cumulative returns. Observe that $5$-step generalized \gls{SIL} (blue) generally outperforms other baselines. Tasks with \emph{DM} are from DeepMind Control Suite, and tasks with \emph{(B)} are from Bullet.}}
\label{figure:td3}
\end{figure}

We make several observations: (1) For uncorrected $n$-step, the best $n$ is task dependent. However, $5$-step generalized \gls{SIL} consistently improves the performance over uncorrected $n$-step \gls{TD3} baselines; (2) \gls{SIL} losses generally accelerate the optimization. Indeed, both generalized \gls{SIL} and return-based \gls{SIL} generally performs better than pure \gls{TD3} algorithms; (3) The advantage of generalized \gls{SIL} is more than $n$-step bootstrap. Because $n$-step generalized \gls{SIL} is similar to $n$-step updates, it is reasonable to speculate that the performance gains of \gls{SIL} are partly attributed to $n$-step updates. By the significant advantages of generalized \gls{SIL} relative to $n$-step updates, we see that its performance gains also come from the lower bound techniques; (4) $n$-step \gls{SIL} with $n=5$ works the best. With $n=1$, \gls{SIL} does not benefit from bootstrapping partial trajectories with long horizons; with $n=\infty$, \gls{SIL} does not benefit from bootstrapped values at all. As discussed in Section \ref{sec:sil}, $n$-step bootstrap provides benefits in (i) variance reduction (replacing the discounted sum of rewards by a value function) and (ii) tightened bounds. In deterministic environment with deterministic policy, the advantage (ii) leads to most of the performance gains.

\subsection{Ablation study for deterministic actor-critic}

Please refer to Table \ref{table:summary} in Appendix \ref{appendix:exp} for a summary of ablation experiments over \gls{SIL} variants. Here, we focus on discussions of the ablation results.

\paragraph{Horizon parameter $n$.} In our experience, we find that $n=5$ works reasonably well though other close values might work as well. To clarify the extreme effect of $n$: at one extreme, $n=1$ and \gls{SIL} does not benefit from trajectory-based learning and generally underperforms $n=5$; when $n=\infty$, the return-based \gls{SIL} does not provide as significant speed up as $n=5$.

\paragraph{Prioritized experience replay.} In general, prioritized replay has two hyper-parameters: $\alpha$ for the degree of prioritized sampling and $\beta$ for the degree of corrections \citep{schaul2016}. For general \gls{SIL}, we adopt $\alpha=0.6,\beta=0.1$ as in \citep{oh2018self}. We also consider  variants where the tuples are sampled according to the priority but IS weights are not corrected ($\alpha=0.6,\beta=0.0$) and where there is no prioritized sampling ($\alpha=\beta=0.0$). The results are reported in Table \ref{table:priority}
in Appendix \ref{appendix:exp}. We observe that generalized \gls{SIL} works the best when both prioritized sampling and IS corrections are present.


\paragraph{Over-estimation bias.} Algorithms with over-estimation bias (e.g. \gls{DDPG}) does not benefit as much (e.g. \gls{TD3}) from the lower bound loss, as shown by additional results in Appendix \ref{appendix:exp}. We speculate that this is because by construction the Q-function network $Q_\theta(x,a)$ should be a close approximation to the Q-function $Q^{\pi}(x,a)$. In cases where over-estimation bias is severe, this assumption does not hold. As a result, the performance is potentially harmed instead of improved by the \emph{uncontrolled} positive bias \citep{van2016deep,fujimoto2018addressing}. This contrasts with the \emph{controlled} positive bias of \gls{SIL}, which improves the performance.

\subsection{Stochastic actor-critic}
\paragraph{Baselines.} For the stochastic actor-critic algorithm, we adopt \gls{PPO} \citep{schulman2017}. Unlike critic-based algorithms such as \gls{TD3}, \gls{PPO} estimates gradients using near on-policy samples.

\paragraph{Delayed reward environments.} Delayed reward environment tests algorithms' capability to tackle delayed feedback in the form of sparse rewards \citep{oh2018self}. In particular, a standard benchmark environment returns dense reward $r_t$ at each step $t$. Consider accumulating the reward over $d$ consecutive steps and return the sum at the end $k$ steps, i.e. $r_t^\prime=0$ if $ t \ \text{mod}\ k \neq 0$ and $r_t^\prime = \sum_{\tau=t-d+1}^t r_\tau$ if $t\ \text{mod}\ d =0$. 

\paragraph{Evaluations.} We compare three baselines: \gls{PPO}, \gls{PPO} with \gls{SIL} \citep{oh2018self} and \gls{PPO} with generalized \gls{SIL} with $n=5$-step. We train these variants on a set of OpenAI gym tasks with delayed rewards, where the delays are $k\in \{1,5,10,20\}$. Please refer to Appendix \ref{appendix:exp} for further details of the algorithms. The final performance of algorithms after training ($5\cdot 10^6$ steps for HalfCheetah and $10^7$ for the others) are shown in Figure \ref{figure:ppo}. We make several observations: \textbf{(1)} The performance of \gls{PPO} is generally inferior to its generalized \gls{SIL} or \gls{SIL} extensions. This implies the necessity of carrying out \gls{SIL} in general, as observed in \citep{oh2018self}; \textbf{(2)} The performance of generalized \gls{SIL} with $n=5$ differ depending on the tasks. \gls{SIL} works significantly better with Ant, while generalized \gls{SIL} works better with Humanoid. Since \gls{SIL} is a special case for $n\rightarrow \infty$, this implies the potential benefits of adapting $n$ for each task.

\begin{figure}[h]
\centering
\subfigure[\textbf{HalfCheetah}]{\includegraphics[width=.24\linewidth]{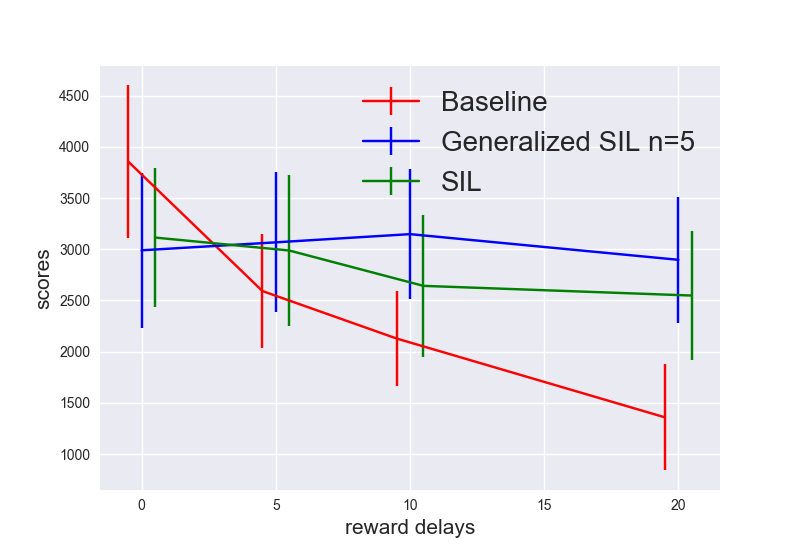}}
\subfigure[\textbf{Ant}]{\includegraphics[width=.24\linewidth]{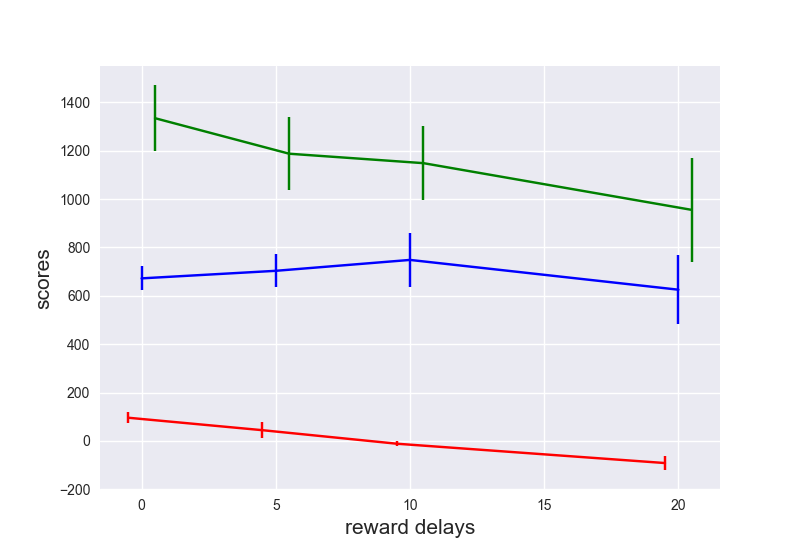}}
\subfigure[\textbf{Walke2d}]{\includegraphics[width=.24\linewidth]{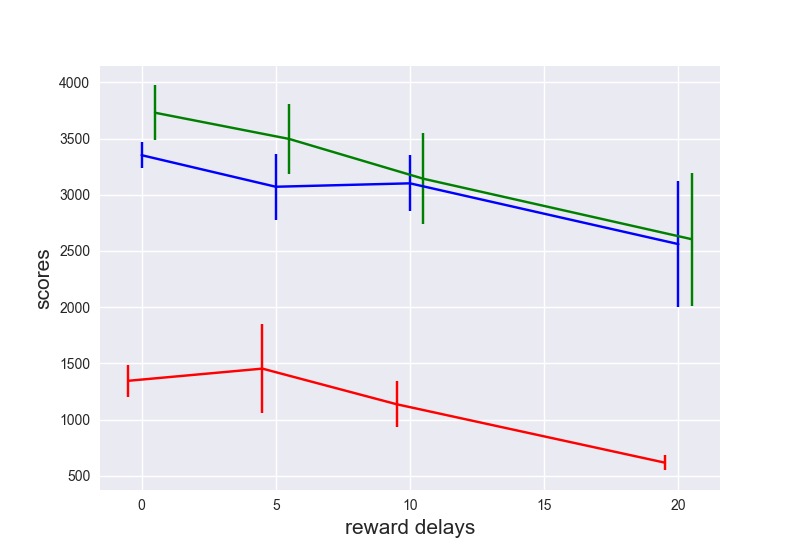}}
\subfigure[\textbf{Humanoid}]{\includegraphics[width=.24\linewidth]{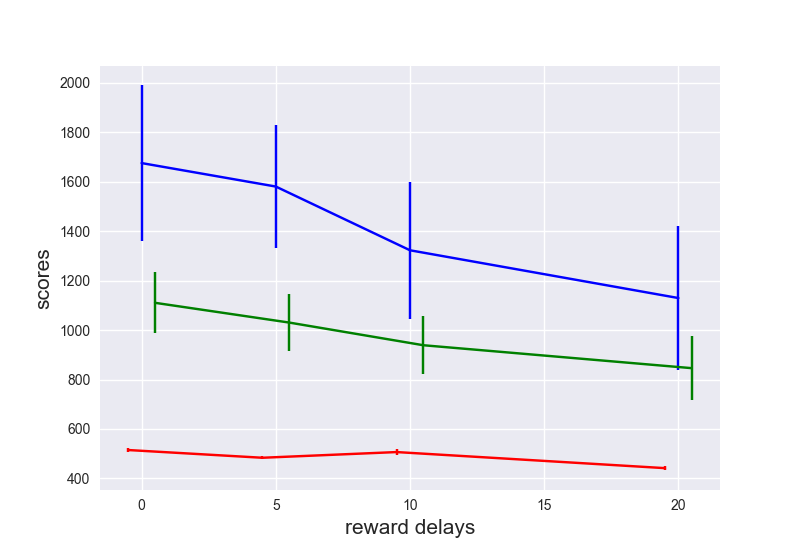}}
\caption{\small{Standard evaluations on $4$ benchmark OpenAI gym tasks. Different colors represent different algorithmic variants. Each curve shows the $\text{mean}\pm 0.5 \text{std}$ of evaluation performance at the end of training, averaged across $5$ random seeds. The x-axis shows the delayed time steps for rewards and the y-axis shows the cumulative returns. The ticks $\{1,5,10,20\}$ show the delays and the x-axis of the plotted data is slightly shifted for better visualization.}}
\label{figure:ppo}
\end{figure}

\section{Further Discussions on Related Work}

\paragraph{Over-estimation bias in Q-learning.} Q-learning and \gls{TD}-learning are popular algorithms for \gls{RL} \citep{watkins1992q,bertsekas1995neuro}. Due to the max operator, sampled updates of Q-learning naturally incur over-estimation bias, which potentially leads to unstable learning. To mitigate the bias, prior work has considered Double Q-learning\cite{hasselt2010double,van2016deep}, explicit bias correction \citep{lee2013bias}, linear combination between Double Q-learning and Q-learning \citep{zhang2017weighted}, bootstrapping from past predictions \citep{anschel2017averaged} and using an ensemble of Q-functions \citep{lan2020maxmin}. Similar ideas have been applied to actor-critic algorithms \citep{fujimoto2018addressing}. While it is conventionally believed that over-estimation bias is hurtful to the performance, \citep{lan2020maxmin} provides concrete examples where estimation bias (under- or over-estimation) could accelerate learning. In practice, for certain \gls{RL} environments where rewards are sparse, it is desirable to introduce positive bias to encourage exploration \citep{oh2018self}. 

\paragraph{Learning from off-policy data.} Off-policy learning is crucial for modern \gls{RL} algorithms \citep{sutton1998,szepesvari2010algorithms}. At the core of many off-policy learning algorithms \citep{precup2001off,dudik2014doubly,thomas2016data,munos2016safe,mahmood2017multi,farajtabar2018more}, importance sampling (\gls{IS}) corrects for the distributional mismatch between behavior $\pi$ and target policy $\mu$, generating \emph{unbiased} updates. Despite the theoretical foundations, \gls{IS}-based algorithms often underperform empirically motivated algorithms such as $n$-step updates \citep{hessel2018rainbow,barth2018distributed,kapturowski2018recurrent}. In general, uncorrected $n$-step algorithms could be interpreted as trading-off fast contractions with fixed point bias \citep{rowland2019adaptive}, which seems to have a significant practical effect. In addition to potentially better performance, uncorrected $n$-step updates also do not require e.g. $\mu(a\mid x)$. This entails learning with truly arbitrary off-policy data. Built on top of $n$-step updates, we propose generalized $n$-step \gls{SIL} which intentionally introduces a positive bias into the fixed point, effectively filtering out behavior data with poor performance. This idea of learning from good-performing off-policy data is rooted in algorithmic paradigms such as behavior cloning \citep{ross2011reduction}, inverse \gls{RL} \citep{abbeel2004apprenticeship}, and more recently instantiated by e.g., episodic control \citep{blundell2016model,pritzel2017neural} lower bound Q-learning \citep{he2016learning} and \gls{SIL} \citep{oh2018self,gangwani2018learning,guo2019efficient}.

\section{Conclusion}
We have proposed generalized $n$-step lower bound Q-learning, a strict generalization of return-based lower bound Q-learning and the corresponding self-imitation learning algorithm \citep{oh2018self}. We have drawn close connections between $n$-step lower bound Q-learning and uncorrected $n$-step updates: both techniques achieve  performance gains by invoking a trade-off between contraction rates and fixed point bias of the evaluation operators. Empirically, we observe that the positive bias induced by lower bound Q-learning provides more consistent improvements than arbitrary $n$-step bias. It is of interest to study in general what bias could be beneficial to policy optimization, and how to exploit such bias in practical \gls{RL} algorithms.

\section{Broader Impact}

Algorithms which learn from off-policy samples are critical for the applications of \gls{RL} to more impactful real life domains such as autonomous driving and health care. Our work provides insights into \gls{SIL}, and its close connections to popular off-policy learning techniques such as $n$-step Q-learning. We believe our work entails a positive step towards better understanding of efficient off-policy \gls{RL} algorithms, which paves the way for future research into important applications. 

\section{Acknowledgements}
The author thanks Mark Rowland and Tadashi Kozuno for insightful discussions about this project.

\newpage
\bibliographystyle{unsrt}
\bibliography{your_bib_file}

\begin{thebibliography}{10}

\bibitem{precup2001off}
Doina Precup, Richard~S Sutton, and Sanjoy Dasgupta.
\newblock Off-policy temporal-difference learning with function approximation.
\newblock In {\em ICML}, pages 417--424, 2001.

\bibitem{harutyunyan2016q}
Anna Harutyunyan, Marc~G Bellemare, Tom Stepleton, and R{\'e}mi Munos.
\newblock Q (lambda) with off-policy corrections.
\newblock In {\em International Conference on Algorithmic Learning Theory},
  pages 305--320. Springer, 2016.

\bibitem{munos2016safe}
R{\'e}mi Munos, Tom Stepleton, Anna Harutyunyan, and Marc Bellemare.
\newblock Safe and efficient off-policy reinforcement learning.
\newblock In {\em Advances in Neural Information Processing Systems}, pages
  1054--1062, 2016.

\bibitem{hessel2018rainbow}
Matteo Hessel, Joseph Modayil, Hado Van~Hasselt, Tom Schaul, Georg Ostrovski,
  Will Dabney, Dan Horgan, Bilal Piot, Mohammad Azar, and David Silver.
\newblock Rainbow: Combining improvements in deep reinforcement learning.
\newblock In {\em Thirty-Second AAAI Conference on Artificial Intelligence},
  2018.

\bibitem{barth2018distributed}
Gabriel Barth-Maron, Matthew~W Hoffman, David Budden, Will Dabney, Dan Horgan,
  Dhruva Tb, Alistair Muldal, Nicolas Heess, and Timothy Lillicrap.
\newblock Distributed distributional deterministic policy gradients.
\newblock {\em arXiv preprint arXiv:1804.08617}, 2018.

\bibitem{kapturowski2018recurrent}
Steven Kapturowski, Georg Ostrovski, John Quan, Remi Munos, and Will Dabney.
\newblock Recurrent experience replay in distributed reinforcement learning.
\newblock 2018.

\bibitem{rowland2019adaptive}
Mark Rowland, Will Dabney, and R{\'e}mi Munos.
\newblock Adaptive trade-offs in off-policy learning.
\newblock {\em arXiv preprint arXiv:1910.07478}, 2019.

\bibitem{oh2018self}
Junhyuk Oh, Yijie Guo, Satinder Singh, and Honglak Lee.
\newblock Self-imitation learning.
\newblock {\em arXiv preprint arXiv:1806.05635}, 2018.

\bibitem{gangwani2018learning}
Tanmay Gangwani, Qiang Liu, and Jian Peng.
\newblock Learning self-imitating diverse policies.
\newblock {\em arXiv preprint arXiv:1805.10309}, 2018.

\bibitem{guo2019efficient}
Yijie Guo, Jongwook Choi, Marcin Moczulski, Samy Bengio, Mohammad Norouzi, and
  Honglak Lee.
\newblock Efficient exploration with self-imitation learning via
  trajectory-conditioned policy.
\newblock {\em arXiv preprint arXiv:1907.10247}, 2019.

\bibitem{he2016learning}
Frank~S He, Yang Liu, Alexander~G Schwing, and Jian Peng.
\newblock Learning to play in a day: Faster deep reinforcement learning by
  optimality tightening.
\newblock {\em arXiv preprint arXiv:1611.01606}, 2016.

\bibitem{bellman1957markovian}
Richard Bellman.
\newblock A markovian decision process.
\newblock {\em Journal of mathematics and mechanics}, pages 679--684, 1957.

\bibitem{sutton1999}
Richard~S Sutton, David~A McAllester, Satinder~P Singh, and Yishay Mansour.
\newblock Policy gradient methods for reinforcement learning with function
  approximation.
\newblock In {\em Advances in neural information processing systems}, pages
  1057--1063, 2000.

\bibitem{mnih2013}
Volodymyr Mnih, Koray Kavukcuoglu, David Silver, Alex Graves, Ioannis
  Antonoglou, Daan Wierstra, and Martin Riedmiller.
\newblock Playing atari with deep reinforcement learning.
\newblock {\em arXiv preprint arXiv:1312.5602}, 2013.

\bibitem{lillicrap2015continuous}
Timothy~P Lillicrap, Jonathan~J Hunt, Alexander Pritzel, Nicolas Heess, Tom
  Erez, Yuval Tassa, David Silver, and Daan Wierstra.
\newblock Continuous control with deep reinforcement learning.
\newblock {\em arXiv preprint arXiv:1509.02971}, 2015.

\bibitem{schulman2015}
John Schulman, Sergey Levine, Pieter Abbeel, Michael Jordan, and Philipp
  Moritz.
\newblock Trust region policy optimization.
\newblock In {\em International Conference on Machine Learning}, pages
  1889--1897, 2015.

\bibitem{wang2016}
Ziyu Wang, Tom Schaul, Matteo Hessel, Hado Van~Hasselt, Marc Lanctot, and Nando
  De~Freitas.
\newblock Dueling network architectures for deep reinforcement learning.
\newblock {\em arXiv preprint arXiv:1511.06581}, 2015.

\bibitem{schulman2017}
John Schulman, Filip Wolski, Prafulla Dhariwal, Alec Radford, and Oleg Klimov.
\newblock Proximal policy optimization algorithms.
\newblock {\em arXiv preprint arXiv:1707.06347}, 2017.

\bibitem{ziebart2010}
Brian~D Ziebart.
\newblock {\em Modeling purposeful adaptive behavior with the principle of
  maximum causal entropy}.
\newblock Carnegie Mellon University, 2010.

\bibitem{fox2015taming}
Roy Fox, Ari Pakman, and Naftali Tishby.
\newblock Taming the noise in reinforcement learning via soft updates.
\newblock {\em arXiv preprint arXiv:1512.08562}, 2015.

\bibitem{asadi2017}
Kavosh Asadi and Michael~L Littman.
\newblock An alternative softmax operator for reinforcement learning.
\newblock In {\em International Conference on Machine Learning}, pages
  243--252, 2017.

\bibitem{nachum2017bridging}
Ofir Nachum, Mohammad Norouzi, Kelvin Xu, and Dale Schuurmans.
\newblock Bridging the gap between value and policy based reinforcement
  learning.
\newblock In {\em Advances in Neural Information Processing Systems}, pages
  2775--2785, 2017.

\bibitem{haarnoja2018soft}
Tuomas Haarnoja, Aurick Zhou, Pieter Abbeel, and Sergey Levine.
\newblock Soft actor-critic: Off-policy maximum entropy deep reinforcement
  learning with a stochastic actor.
\newblock {\em arXiv preprint arXiv:1801.01290}, 2018.

\bibitem{baird1995residual}
Leemon Baird.
\newblock Residual algorithms: Reinforcement learning with function
  approximation.
\newblock In {\em Machine Learning Proceedings 1995}, pages 30--37. Elsevier,
  1995.

\bibitem{mnih2016}
Volodymyr Mnih, Adria~Puigdomenech Badia, Mehdi Mirza, Alex Graves, Timothy
  Lillicrap, Tim Harley, David Silver, and Koray Kavukcuoglu.
\newblock Asynchronous methods for deep reinforcement learning.
\newblock In {\em International Conference on Machine Learning}, pages
  1928--1937, 2016.

\bibitem{espeholt2018impala}
Lasse Espeholt, Hubert Soyer, Remi Munos, Karen Simonyan, Volodymir Mnih, Tom
  Ward, Yotam Doron, Vlad Firoiu, Tim Harley, Iain Dunning, et~al.
\newblock Impala: Scalable distributed deep-rl with importance weighted
  actor-learner architectures.
\newblock {\em arXiv preprint arXiv:1802.01561}, 2018.

\bibitem{fujimoto2018addressing}
Scott Fujimoto, Herke Van~Hoof, and David Meger.
\newblock Addressing function approximation error in actor-critic methods.
\newblock {\em arXiv preprint arXiv:1802.09477}, 2018.

\bibitem{schaul2016}
Tom Schaul, John Quan, Ioannis Antonoglou, and David Silver.
\newblock Prioritized experience replay.
\newblock {\em arXiv preprint arXiv:1511.05952}, 2015.

\bibitem{horgan2018distributed}
Dan Horgan, John Quan, David Budden, Gabriel Barth-Maron, Matteo Hessel, Hado
  Van~Hasselt, and David Silver.
\newblock Distributed prioritized experience replay.
\newblock {\em arXiv preprint arXiv:1803.00933}, 2018.

\bibitem{van2016deep}
Hado Van~Hasselt, Arthur Guez, and David Silver.
\newblock Deep reinforcement learning with double q-learning.
\newblock In {\em Thirtieth AAAI conference on artificial intelligence}, 2016.

\bibitem{brockman2016}
Greg Brockman, Vicki Cheung, Ludwig Pettersson, Jonas Schneider, John Schulman,
  Jie Tang, and Wojciech Zaremba.
\newblock Openai gym.
\newblock {\em arXiv preprint arXiv:1606.01540}, 2016.

\bibitem{tassa2018deepmind}
Yuval Tassa, Yotam Doron, Alistair Muldal, Tom Erez, Yazhe Li, Diego de~Las
  Casas, David Budden, Abbas Abdolmaleki, Josh Merel, Andrew Lefrancq, et~al.
\newblock Deepmind control suite.
\newblock {\em arXiv preprint arXiv:1801.00690}, 2018.

\bibitem{coumans2010bullet}
Erwin Coumans.
\newblock Bullet physics engine.
\newblock {\em Open Source Software: http://bulletphysics. org}, 1(3):84, 2010.

\bibitem{hasselt2010double}
Hado~V Hasselt.
\newblock Double q-learning.
\newblock In {\em Advances in neural information processing systems}, pages
  2613--2621, 2010.

\bibitem{watkins1992q}
Christopher~JCH Watkins and Peter Dayan.
\newblock Q-learning.
\newblock {\em Machine learning}, 8(3-4):279--292, 1992.

\bibitem{bertsekas1995neuro}
Dimitri~P Bertsekas and John~N Tsitsiklis.
\newblock Neuro-dynamic programming: an overview.
\newblock In {\em Proceedings of 1995 34th IEEE Conference on Decision and
  Control}, volume~1, pages 560--564. IEEE, 1995.

\bibitem{lee2013bias}
Donghun Lee, Boris Defourny, and Warren~B Powell.
\newblock Bias-corrected q-learning to control max-operator bias in q-learning.
\newblock In {\em 2013 IEEE Symposium on Adaptive Dynamic Programming and
  Reinforcement Learning (ADPRL)}, pages 93--99. IEEE, 2013.

\bibitem{zhang2017weighted}
Zongzhang Zhang, Zhiyuan Pan, and Mykel~J Kochenderfer.
\newblock Weighted double q-learning.
\newblock In {\em IJCAI}, pages 3455--3461, 2017.

\bibitem{anschel2017averaged}
Oron Anschel, Nir Baram, and Nahum Shimkin.
\newblock Averaged-dqn: Variance reduction and stabilization for deep
  reinforcement learning.
\newblock In {\em International Conference on Machine Learning}, pages
  176--185. PMLR, 2017.

\bibitem{lan2020maxmin}
Qingfeng Lan, Yangchen Pan, Alona Fyshe, and Martha White.
\newblock Maxmin q-learning: Controlling the estimation bias of q-learning.
\newblock {\em arXiv preprint arXiv:2002.06487}, 2020.

\bibitem{sutton1998}
Richard~S Sutton and Andrew~G Barto.
\newblock {\em Reinforcement learning: An introduction}, volume~1.
\newblock MIT press Cambridge, 1998.

\bibitem{szepesvari2010algorithms}
Csaba Szepesv{\'a}ri.
\newblock Algorithms for reinforcement learning.
\newblock {\em Synthesis lectures on artificial intelligence and machine
  learning}, 4(1):1--103, 2010.

\bibitem{dudik2014doubly}
Miroslav Dud{\'\i}k, Dumitru Erhan, John Langford, Lihong Li, et~al.
\newblock Doubly robust policy evaluation and optimization.
\newblock {\em Statistical Science}, 29(4):485--511, 2014.

\bibitem{thomas2016data}
Philip Thomas and Emma Brunskill.
\newblock Data-efficient off-policy policy evaluation for reinforcement
  learning.
\newblock In {\em International Conference on Machine Learning}, pages
  2139--2148, 2016.

\bibitem{mahmood2017multi}
Ashique~Rupam Mahmood, Huizhen Yu, and Richard~S Sutton.
\newblock Multi-step off-policy learning without importance sampling ratios.
\newblock {\em arXiv preprint arXiv:1702.03006}, 2017.

\bibitem{farajtabar2018more}
Mehrdad Farajtabar, Yinlam Chow, and Mohammad Ghavamzadeh.
\newblock More robust doubly robust off-policy evaluation.
\newblock {\em arXiv preprint arXiv:1802.03493}, 2018.

\bibitem{ross2011reduction}
St{\'e}phane Ross, Geoffrey Gordon, and Drew Bagnell.
\newblock A reduction of imitation learning and structured prediction to
  no-regret online learning.
\newblock In {\em Proceedings of the fourteenth international conference on
  artificial intelligence and statistics}, pages 627--635, 2011.

\bibitem{abbeel2004apprenticeship}
Pieter Abbeel and Andrew~Y Ng.
\newblock Apprenticeship learning via inverse reinforcement learning.
\newblock In {\em Proceedings of the twenty-first international conference on
  Machine learning}, page~1, 2004.

\bibitem{blundell2016model}
Charles Blundell, Benigno Uria, Alexander Pritzel, Yazhe Li, Avraham Ruderman,
  Joel~Z Leibo, Jack Rae, Daan Wierstra, and Demis Hassabis.
\newblock Model-free episodic control.
\newblock {\em arXiv preprint arXiv:1606.04460}, 2016.

\bibitem{pritzel2017neural}
Alexander Pritzel, Benigno Uria, Sriram Srinivasan, Adria Puigdomenech, Oriol
  Vinyals, Demis Hassabis, Daan Wierstra, and Charles Blundell.
\newblock Neural episodic control.
\newblock {\em arXiv preprint arXiv:1703.01988}, 2017.

\bibitem{achiam2018openai}
Joshua Achiam.
\newblock Openai spinning up.
\newblock {\em GitHub, GitHub repository}, 2018.

\bibitem{kingma2014adam}
Diederik~P Kingma and Jimmy Ba.
\newblock Adam: A method for stochastic optimization.
\newblock {\em arXiv preprint arXiv:1412.6980}, 2014.

\bibitem{schaul2015prioritized}
Tom Schaul, John Quan, Ioannis Antonoglou, and David Silver.
\newblock Prioritized experience replay.
\newblock {\em arXiv preprint arXiv:1511.05952}, 2015.

\bibitem{baselines}
Prafulla Dhariwal, Christopher Hesse, Oleg Klimov, Alex Nichol, Matthias
  Plappert, Alec Radford, John Schulman, Szymon Sidor, and Yuhuai Wu.
\newblock Openai baselines.
\newblock \url{https://github.com/openai/baselines}, 2017.

\end{thebibliography}

\newpage
\appendix

\section{Proof of Theorem \ref{thm:maxent-gen-lowerbound}}
\label{appendix:lowerbound}
Recall that under maximum entropy RL, the Q-function is defined as $Q_{\text{ent}}^\pi(x_0,a_0) \coloneqq \mathbb{E}_\pi[r_0 + \sum_{t=1}^\infty \gamma^t (r_t + c \mathcal{H}^\pi(x_t))]$ where $\mathcal{H}^\mu(x_t)$ is the entropy of the distribution $\pi^\mu(\cdot\mid x_t)$. The Bellman equation for Q-function is naturally
\begin{align*}
    Q^\pi(x_0,a_0) = \mathbb{E}_\pi[r_0 + \gamma c \mathcal{H}^\pi(x_1) + \gamma Q^\pi(x_1,a_1)].
\end{align*}
Let the optimal policy be $\pi_{\text{ent}}^\ast$. The relationship between the optimal policy and its Q-function is $\pi_{\text{ent}}^\ast(a\mid x) \propto \exp(Q_{\text{ent}}^{\pi_{\text{ent}}^\ast}(x,a) / c)$. We seek to establish $Q_{\text{ent}}^{\pi_{\text{ent}}^\ast}(x_0,a_0) \geq \mathbb{E}_\mu[r_0 + \gamma c \mathcal{H}^\mu(x_1) + \sum_{t=1}^{T-1} \gamma^t (r_t + c \mathcal{H}^\mu(x_{t+1})) + \gamma^T Q_{\text{ent}}^\pi(x_T,a_T)]$ for any policy $\mu,\pi$.

We prove the results using induction. For the base case $T=1$, \begin{align}
    Q_{\text{ent}}^{\pi_{\text{ent}}^\ast}(x_0,a_0) &= \mathbb{E}_{\pi_{\text{ent}}^\ast}[r_0 + \gamma c \mathcal{H}^{\pi_{\text{ent}^\ast}}(x_1) + \gamma Q_{\text{ent}}^{\pi_{\text{ent}}^\ast}(x_1,a_1)] \nonumber \\
    &= \mathbb{E}_{x_1 \sim p(\cdot\mid x_0,a_0)}\big[r_0 + \gamma c \mathcal{H}^{\pi_{\text{ent}^\ast}}(x_1) + \gamma \mathbb{E}_{\pi_{\text{ent}}^\ast}[ Q_{\text{ent}}^{\pi_{\text{ent}}^\ast}(x_1,a_1)]\big] \nonumber \\
    &\geq \mathbb{E}_{x_1 \sim p(\cdot\mid x_0,a_0)}\big[r_0 + \gamma c \mathcal{H}^\mu(x_1) + \gamma \mathbb{E}_{\mu}[ Q_{\text{ent}}^{\pi_{\text{ent}}^\ast}(x_1,a_1)]\big] \nonumber \\
    &\geq \mathbb{E}_{x_1 \sim p(\cdot\mid x_0,a_0)}\big[r_0 + \gamma c \mathcal{H}^\mu(x_1) + \gamma \mathbb{E}_{\mu}[ Q_{\text{ent}}^{\pi}(x_1,a_1)]\big]. \nonumber \\
\end{align}
In the above, to make the derivations clear, we single out the reward $r_0$ and state $x_1 \sim p(\cdot\mid x_0,a_0)$, note that the distributions of these two quantities do not depend on the policy. The first inequality follows from the fact that $\pi_{\text{ent}}^\ast(\cdot\mid x) = \arg\max_{\pi} [c \mathcal{H}^\pi(x) + \mathbb{E}_{a\sim \pi(\cdot\mid x)} Q_{\text{ent}}^{\pi_{\text{ent}}}(x,a)]$. The second inequality follows from $Q_{\text{ent}}^{\pi_{\text{ent}}^\ast}(x,a) \geq Q_{\text{ent}}^\pi(x,a)$ for any policy $\pi$.

With the base case in place, assume that the result holds for $T\leq k-1$. Consider the case $T=k$
\begin{align}
    \mathbb{E}_\mu[r_0 + \gamma c \mathcal{H}^\mu(x_1) &+ \sum_{t=1}^{T-1} \gamma^t (r_t + c \mathcal{H}^\mu(x_{t+1})) + \gamma^T Q_{\text{ent}}^\pi(x_T,a_T)]  \nonumber \\
    &\leq \mathbb{E}_\mu \big[r_0 + \gamma c \mathcal{H}^\mu(x_1) + \gamma \mathbb{E}_\mu [Q_{\text{ent}}^{\pi_{\text{ent}}^\ast}(x_1,a_1)]\big] \nonumber \\
    &\leq Q_{\text{ent}}^{\pi_{\text{ent}}^\ast}(x_0,a_0), \nonumber
\end{align}

When $\pi=\mu$ we have the special case $\mathbb{E}_{\mu}[\sum_{t=0}^{\infty} \gamma^t r_t] \leq V^{\pi^\ast}(x_0)$, the lower bound which motivated the original lower-bound Q-learning based self-imitation learning \citep{oh2018self}.

\section{Proof of Theorem \ref{thm:sil-nstep-operator}}
\label{appendix:operator}
For notational simplicity, let $\mathcal{U} \coloneqq (\mathcal{T}^\mu)^{n-1} \mathcal{T}^\pi$ and let $\tilde{\mathcal{U}} Q(x,a) \coloneqq Q(x,a) + [UQ(x,a) - Q(x,a)]_+$. As a result, we could write $\mathcal{T_{\text{n,sil}}^{\alpha,\beta}} = (1-\beta)\mathcal{T}^\pi + (1-\alpha) \beta \tilde{\mathcal{U}} + \alpha\beta \mathcal{U}$. 

First, we prove the contraction properties of  $\mathcal{T_{\beta,\text{n,sil}}^{\mu}}$. Note that by construction $|\tilde{\mathcal{U}}Q_1(x,a) - \tilde{\mathcal{U}}Q_2(x,a)| \leq \max (|Q_1(x,a) - Q_2(x,a)|,|\mathcal{U}Q_1(x,a) - \mathcal{U}Q_2(x,a)|) \leq \;\|\; Q_1-Q_2\;\|\; _\infty$. Then through the triangle inequality, $\;\|\; \mathcal{T_{\text{n,sil}}^{\alpha,\beta}} Q_1 - \mathcal{T_{\text{n,sil}}^{\alpha,\beta}} Q_2\;\|\; _\infty \leq (1-\beta) \;\|\; \mathcal{T}^\pi Q_1 - \mathcal{T}^\pi Q_2\;\|\; _\infty + (1-\alpha)\beta \;\|\; \tilde{\mathcal{U}}Q_1 - \tilde{\mathcal{U}}Q_2\;\|\; _\infty + \alpha\beta \;\|\; \mathcal{U}Q_1 - \mathcal{U}Q_2\;\|\; _\infty \leq [(1-\beta)\gamma + (1-\alpha)\beta + \alpha\beta\gamma^n] \;\|\; Q_1 - Q_2\;\|\; _\infty$. This proves the upper bound on the contraction rates of $\mathcal{T_{\text{n,sil}}^{\alpha,\beta}}$.  Let $\eta(\alpha,\beta) = (1-\beta)\gamma + (1-\alpha)\beta + \alpha\beta\gamma^n$ and set $\eta(\alpha,\beta) < \gamma$, we deduce $\alpha > \frac{1-\gamma}{1-\gamma^n}$.

Next, we show properties of the fixed point $\tilde{Q}^{\alpha,\beta}$. This point uniquely exists because $\Gamma(\mathcal{T_{\text{n,sil}}^{\alpha,\beta}}) < 1$ if $(1-\alpha)\beta < 1$. From $\mathcal{T_{\text{n,sil}}^{\alpha,\beta}} \tilde{Q}^{\alpha,\beta} = \tilde{Q}^{\alpha,\beta}$, we could derive by rearranging terms $(1-\beta) (\mathcal{T}^\pi \tilde{Q}  - \tilde{Q}) +  \alpha\beta (\mathcal{U} \tilde{Q} - \tilde{Q}) = -(1-\alpha)\beta (\tilde{\mathcal{U}} \tilde{Q} - \tilde{Q}) \leq 0$. This further implies that $\mathcal{T}^\pi \tilde{Q} \leq \tilde{Q}$. Now let $\mathcal{T} \coloneqq \frac{(1-\beta)}{1-\beta+\alpha\beta} \mathcal{T}^\pi + \frac{\alpha\beta}{1-\beta+\alpha\beta} \mathcal{U}$. This simplifies to $\mathcal{T} \tilde{Q} - \tilde{Q} \leq 0$. By the monotonicity of $\mathcal{T}$, we see $Q^{t\pi + (1-t)\mu^{n-1}\pi} \geq \lim_{k\rightarrow} (\mathcal{T})^k \tilde{Q} = Q^\pi$ where $t = \frac{1-\beta}{1-\beta+\alpha\beta}$.

For the another set of inequalities, define $\tilde{H} Q \coloneqq  (1-\beta)\mathcal{T}^\ast + (1-\alpha)\beta\tilde{\mathcal{U}}Q + \alpha\beta (\mathcal{T}^\ast)^n$, where recall that $\mathcal{T}^\ast$ is the optimality Bellman operator. 

First, note $\tilde{H}$ has $Q^{\pi^\ast}$ as its unique fixed point. To see why, let $\tilde{Q}$ be a generic fixed point of $\tilde{H}$ such that $\tilde{H}\tilde{Q} = \tilde{Q}$. By rearranging terms, it follows that $(1-\beta) (\mathcal{T}^\ast \tilde{Q}  - \tilde{Q}) +  \alpha\beta ((\mathcal{T}^\ast)^n \tilde{Q} - \tilde{Q}) = -(1-\alpha)\beta (\tilde{\mathcal{U}} \tilde{Q} - \tilde{Q}) \leq 0$. However, by construction $(\mathcal{T}^\ast)^i Q \geq Q, \forall i\geq 1, \forall Q$. This implies that $(1-\beta) (\mathcal{T}^\ast \tilde{Q}  - \tilde{Q}) +  \alpha\beta ((\mathcal{T}^\ast)^n \tilde{Q} - \tilde{Q}) \geq 0$. As a result, $(1-\beta) (\mathcal{T}^\ast \tilde{Q}  - \tilde{Q}) +  \alpha\beta ((\mathcal{T}^\ast)^n \tilde{Q} - \tilde{Q}) = 0$ and $\tilde{Q}$ is a fixed point of $t\mathcal{T}^\ast + (1-t) (\mathcal{T}^\ast)^n$. Since $t\mathcal{T}^\ast + (1-t) (\mathcal{T}^\ast)^n$ is strictly contractive as $\Gamma(t\mathcal{T}^\ast + (1-t) (\mathcal{T}^\ast)^n) \leq t\gamma + (1-t)\gamma^n \leq \gamma < 1$, its fixed point is unique. It is straightforward to deduce that $Q^{\pi^\ast}$ is a fixed point of $t\mathcal{T}^\ast + (1-t) (\mathcal{T}^\ast)^n$ and we conclude that the only possible fixed point of $\tilde{H}$ is $\tilde{Q} = Q^{\pi^\ast}$. Finally, recall that by construction $\tilde{H} Q \geq Q,\forall Q$. By monotonicity, $Q^{\pi^\ast} = \lim_{k\rightarrow\infty}(\tilde{H})^k \tilde{Q}^{\alpha,\beta} \geq \tilde{Q}^{\alpha,\beta}$.
In conclusion, we have shown $Q^{t\pi + (1-t)\mu^{n-1}\mu} \leq \tilde{Q}^{\alpha,\beta} \leq Q^{\pi^\ast}$.

\section{Additional theoretical results}
\label{appendix:theoretical}
\begin{theorem} 
\label{thm:v-bound}
Let $\pi^\ast$ be the optimal policy and $V^{\pi^\ast}$ its value function under standard \gls{RL} formulation. Given a partial trajectory $(x_t,a_t)_{t=0}^{n}$, the following inequality holds for any $n$,
\begin{align}
    V^{\pi^\ast}(x_0) \geq \mathbb{E}_\mu [\sum_{t=0}^{n-1} \gamma^t r_t + \gamma^n V^\pi(x_k)]
    \label{eq:v-lowerbound}
\end{align}
\end{theorem}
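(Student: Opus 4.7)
The plan is to mirror the inductive proof of Theorem \ref{thm:maxent-gen-lowerbound} from Appendix \ref{appendix:lowerbound}, specialized to the setting with no entropy bonus ($c=0$) and with the value function $V$ in place of the Q-function. The induction runs on the horizon $n$, and the statement is carried through with $\pi$ kept as a free parameter (so that later stages of the argument can instantiate $\pi=\pi^\ast$).

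For the base case $n=1$, I would start from Bellman optimality, $V^{\pi^\ast}(x_0)=\max_{a_0} Q^{\pi^\ast}(x_0,a_0)\ge \mathbb{E}_{a_0\sim\mu(\cdot\mid x_0)}[Q^{\pi^\ast}(x_0,a_0)]$, using that the maximum dominates any weighted average. Expanding via the one-step Bellman equation gives $\mathbb{E}_\mu[r_0+\gamma V^{\pi^\ast}(x_1)]$, and then the pointwise dominance $V^{\pi^\ast}\ge V^\pi$ yields the required $\mathbb{E}_\mu[r_0+\gamma V^\pi(x_1)]$.

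For the inductive step from $n=k-1$ to $n=k$, I would peel the first transition off the right-hand side, writing $\mathbb{E}_\mu[\sum_{t=0}^{k-1}\gamma^t r_t+\gamma^k V^\pi(x_k)] = \mathbb{E}_\mu[r_0+\gamma\,\mathbb{E}_\mu[\sum_{t=0}^{k-2}\gamma^t r_{t+1}+\gamma^{k-1}V^\pi(x_k)\mid x_1]]$ by the tower property. Applying the induction hypothesis at state $x_1$ bounds the inner conditional expectation above by $V^{\pi^\ast}(x_1)$, so the right-hand side is at most $\mathbb{E}_\mu[r_0+\gamma V^{\pi^\ast}(x_1)]$. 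Finally, invoking the base case with $\pi$ replaced by $\pi^\ast$ gives $V^{\pi^\ast}(x_0)\ge \mathbb{E}_\mu[r_0+\gamma V^{\pi^\ast}(x_1)]$, which closes the chain.

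The proof has no serious obstacle; the only delicate point is that the recursive application of the induction hypothesis produces a $V^{\pi^\ast}(x_1)$ rather than a $V^\pi(x_1)$, and the outer step must absorb this, which is precisely what specializing the base case to $\pi=\pi^\ast$ provides (permissible since the statement holds for arbitrary $\pi$). An equivalent and arguably cleaner repackaging is to first prove by induction the strengthened bound $V^{\pi^\ast}(x_0)\ge\mathbb{E}_\mu[\sum_{t=0}^{n-1}\gamma^t r_t+\gamma^n V^{\pi^\ast}(x_n)]$ with $V^{\pi^\ast}$ on both sides, and only at the end relax the terminal term using $V^{\pi^\ast}(x_n)\ge V^\pi(x_n)$; this decouples the inductive mechanics from the final weakening step.
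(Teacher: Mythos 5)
Your proof is correct and follows essentially the same route as the paper's: induction on the horizon, with the base case obtained from $V^{\pi^\ast}(x_0)=\max_a Q^{\pi^\ast}(x_0,a)\ge\mathbb{E}_\mu[Q^{\pi^\ast}(x_0,a_0)]$ and $V^{\pi^\ast}\ge V^\pi$, and the inductive step peeling off the first transition, applying the hypothesis at $x_1$, and closing with the one-step bound instantiated at $\pi^\ast$. The paper's Appendix C proof is exactly this argument, so no further comparison is needed.
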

\begin{proof}
Let $\pi,\mu$ be any policy and $\pi^\ast$ the optimal policy. We seek to show $V^{\pi^\ast}(x_0) \geq \mathbb{E}_{\mu}[\sum_{t=0}^{T-1} \gamma^t r_t + \gamma^T V^\pi(x_T)]$ for any $T \geq 1$.

We prove the results using induction. For the base case $T=1$, $V^{\pi^\ast}(x_0) = \mathbb{E}_{\pi^\ast}[Q^{\pi^\ast}(x_0,a_0)] \geq \mathbb{E}_{\mu}[Q^{\pi^\ast}(x_0,a_0)] = \mathbb{E}_{\mu}[r_0 + \gamma V^{\pi^\ast}(x_1)] \geq \mathbb{E}_{\mu}[r_0 + \gamma V^{\pi}(x_1)] $, where the first inequality comes from the fact that $\pi^\ast(\cdot\mid x_0) = \arg\max_a Q^{\pi^\ast}(x_0,a) $. Now assume that the statement holds for any $T\leq k-1$, we proceed to the case $T=k$.
\begin{align}
    \mathbb{E}_{\mu}[\sum_{t=0}^{k-1} \gamma^t r_t + \gamma^k V^\pi(x_k)] &= \mathbb{E}_{\mu}\big[r_0 + \gamma \mathbb{E}_\mu [ \sum_{t=0}^{k-2} \gamma^t r_t + \gamma^{k-1} V^{\pi}(x_k)]\big] \nonumber \\
    &\leq \mathbb{E}_{\mu}[r_0 + \gamma V^{\pi^\ast}(x_1)] \leq V^{\pi^\ast}(x_0), \nonumber 
\end{align}
where the first inequality comes from the induction hypothesis and the second inequality follows naturally from the base case. This implies that $n$-step quantities of the form $V^{\pi^\ast}(x_0) \geq \mathbb{E}_{\mu}[\sum_{t=0}^{n-1} \gamma^t r_t + \gamma^n V^\pi(x_T)]$ are lower bounds of the optimal value function $V^{\pi^\ast}(x_0)$ for any $n\geq 1$. 
\end{proof}

\section{Experiment details}
\label{appendix:exp}

\paragraph{Implementation details.} The algorithmic baselines for deterministic actor-critic ( \gls{TD3} and \gls{DDPG}) are based on OpenAI Spinning Up \url{https://github.com/openai/spinningup} \citep{achiam2018openai}. The baselines for stochastic actor-critic is based on \gls{PPO} \citep{schulman2017} and \gls{SIL}$+$\gls{PPO}  \citep{oh2018self} are based on the author code base \url{https://github.com/junhyukoh/self-imitation-learning}. Throughout the experiments, all optimizations are carried out via Adam optimizer \citep{kingma2014adam}.

\paragraph{Architecture.} Deterministic actor-critic baselines, including \gls{TD3} and \gls{DDPG} share the same network architecture following \citep{achiam2018openai}. The Q-function network $Q_\theta(x,a)$ and policy $\pi_\phi(x)$ are both $2$-layer neural network with $h=300$ hidden units per layer, before the output layer. Hidden layers are interleaved with $\text{relu}(x)$ activation functions. For the policy $\pi_\phi(x)$, the output is stacked with a $\text{tanh}(x)$ function to ensure that the output action is in $[-1,1]$. All baselines are run with default hyper-parameters from the code base. 

Stochastic actor-critic baselines (e.g. \gls{PPO}) implement value function $V_\theta(x)$ and policy $\pi_\phi(a\mid x)$ both as $2$-layer neural network with $h=64$ hidden units per layer and $\text{tanh}$ activation. The stochastic policy $\pi_\phi(a\mid x)$ is a Gaussian $a\sim \mathcal{N}(\mu_\phi(x), \sigma^2)$ with state-dependent mean $\mu_\phi(x)$ and a global variance parameter $\sigma^2$. Other missing hyper-parameters take default values from the code base.

\subsection{Further implementation and hyper-parameter details}

\paragraph{Generalized \gls{SIL} for deterministic actor-critic.}
We adopt \gls{TD3} \citep{fujimoto2018addressing} as the baseline for deterministic actor-critic. \gls{TD3} maintains a Q-function network $Q_\theta(x,a)$ and a deterministic policy network $\pi_\theta(x)$ with parameter $\theta$. The \gls{SIL} subroutines adopt a prioritized experience replay buffer: the return-based \gls{SIL} samples tuples according to the priority $[R^\mu(x,a) - Q_\theta(x,a)]_+$ and minimizes the loss function $[R^\mu(x,a) - Q_\theta(x,a)]_+$; the generalized \gls{SIL} samples tuples according to the priority $[L^{\pi,\mu,n}(x,a) - Q_\theta(x,a)]_+$ and minimizes the loss function $[L^{\pi,\mu,n}(x,a) - Q_\theta(x,a)]_+$. The experience replay adopts the parameter $\alpha=0.6,\beta=0.1$ \citep{schaul2015prioritized}. Throughout the experiments, \gls{TD3}-based algorithms all employ $\alpha=10^{-3}$ for the network updates.

To calculate the update target $L^{\pi,\mu,n}(x_0,a_0) = \sum_{t=0}^{n-1} \gamma^t r_t + Q_{\theta^\prime}(x_n,\pi_{\theta^\prime}(x_n))$ with partial trajectory $(x_t,a_t,r_t)_{t=0}^n$ along with the target value network $Q_{\theta^\prime}(x,a)$ and policy network $\pi_{\theta^\prime}(x)$. The target network is slowly updated as $\theta^\prime = \tau \theta^\prime + (1-\tau) \theta$ where $\tau=0.995$ \citep{mnih2013}.

\paragraph{Generalized \gls{SIL} for stochastic actor-critic.} We adopt \gls{PPO} \citep{schulman2017} as the baseline algorithm and implement modifications on top of the \gls{SIL} author code base \url{https://github.com/junhyukoh/self-imitation-learning} as well as the original baseline code \url{https://github.com/openai/baselines} \citep{baselines}. All \gls{PPO} variants use the default learning rate $\alpha=3\cdot10^{-4}$ for both actor $\pi_\theta(a\mid x)$ and critic $V_\theta(x)$. The \gls{SIL} subroutines are implemented as a prioritized replay with $\alpha=0.6,\beta=0.1$. For other details of \gls{SIL} in \gls{PPO}, please refer to the \gls{SIL} paper \citep{oh2018self}.

The only difference between generalized \gls{SIL} and \gls{SIL} lies in the implementation of the prioritized replay. \gls{SIL} samples tuples according to the priority $[R^\mu(x,a) - V_\theta(x)]_+$ and minimize the \gls{SIL} loss function $([R^\mu(x,a) - V_\theta(x)]_+)^2$ for the value function, and $-\log \pi_\theta(a\mid x) [R^\mu(x,a) - V_\theta(x)]_+$ for the policy. Generalized \gls{SIL} samples tuples according to the priority $([L^{\pi,\mu,n}(x,a) - V_\theta(x)]_+)^2$, and minimize the loss $([L^{\pi,\mu,n}(x,a) - V_\theta(x)]_+)^2$ and $-\log \pi_\theta(a\mid x) [L^{\pi,\mu,n}(X,a) - V_\theta(x)]_+$ for the value function/policy respectively.

To calculate the update target $L^{\pi,\mu,n}(x_0,a_0) = \sum_{t=0}^{n-1} \gamma^t r_t + V_{\theta^\prime}(x_n)$ with partial trajectory $(x_t,a_t,r_t)_{t=0}^n$ along with the target value network $V_{\theta^\prime}(x)$. We apply the target network technique to stabilizie the update, where $\theta^\prime$ is a delayed version of the major network $\theta$ and is updated as $\theta^\prime = \tau \theta^\prime + (1-\tau) \theta$ where $\tau=0.995$.

\begin{table}[t]
\caption{Summary of the performance of algorithmic variants across benchmark tasks. We use \emph{uncorrected} to denote prioritized sampling without IS corrections. Return-based \gls{SIL} is represented as \gls{SIL} with $n=\infty$. For each task, algorithmic variants with top performance are highlighted (two are highlighted if they are not statistically significantly different). Each entry shows $\text{mean} \pm \text{std}$ performance.}
\begin{small}
\begin{sc}
\begin{tabular}{C{1.in} *1{C{.6in}} *2{C{.6in}} *2{C{.6in}} *1{C{.6in}} *1{C{.6in}}}\toprule[1.5pt]
\bf Tasks & \bf \gls{SIL} $n=5$ & \bf \gls{SIL} $n=5$ (uncorrected)  & \bf \gls{SIL} $n=1$ (uncorrected) & \bf $5$-step & \bf $1$-step & \bf \gls{SIL} $n=\infty$ \ \\\midrule
DMWalkerRun & $\mathbf{642 \pm 107}$ & $\mathbf{675 \pm 15}$ & $500 \pm 138$ & $246 \pm 49$ & $274\pm 100$ & $320 \pm 111$ \\ 
DMWalkerStand & $\mathbf{979 \pm 2}$ & $947 \pm 18$ & $899 \pm 55$ &$749 \pm 150$ & $487 \pm 177$ & $748 \pm 143$ \\ 
DMWalkerWalk & $731 \pm 151$ & $622 \pm 197$ & $601 \pm 108$ &$\mathbf{925 \pm 10}$ & $793 \pm 121$ & $398 \pm 203$ \\ 
DMCheetahRun & $\mathbf{830 \pm 36}$ & $597 \pm 64$ & $702 \pm 72$ & $553 \pm 92$ & $643\pm 83$ & $655 \pm 59$ \\
Ant & $\mathbf{4123 \pm 364}$ & $3059 \pm 360$ & $\mathbf{3166 \pm 390}$ & $1058 \pm 281$ & $3968 \pm 401$ & $3787 \pm 411$\\
HalfCheetah & $8246 \pm 784$ & $9976 \pm 252$ & $\mathbf{10417 \pm 364}$ & $6178 \pm 151$ & $\mathbf{10100 \pm 481}$ & $8389\pm 386$\\
Ant(B) & $\mathbf{2954 \pm 54}$ & $1690 \pm 564$ & $1851 \pm 416$ & $\mathbf{2920 \pm 84}$ & $1866 \pm 623$ & $1884 \pm 631$\\
HalfCheetah(B) & $\mathbf{2619 \pm 129}$ & $\mathbf{2521 \pm 128}$ & $2420 \pm 109$ & $1454 \pm 338$ & $2544 \pm 31$ & $2014 \pm 378$ \\
\bottomrule
\end{tabular}
\end{sc}
\end{small}
\vskip -0.1in
\label{table:summary}
\end{table}

\begin{table}[t]
\caption{Comparison between different replay schemes. For each task, algorithmic variants with top performance are highlighted (two are highlighted if they are not statistically significantly different). Each entry shows $\text{mean} \pm \text{std}$ performance.}
\centering
\begin{small}
\begin{sc}
\begin{tabular}{C{1.in} *1{C{1.in}} *2{C{1.in}}}\toprule[1.5pt]
\bf Tasks & \bf \gls{SIL} $n=5$ & \bf \gls{SIL} $n=5$ (uncorrected)  & \bf \gls{SIL} $n=5$ (no priority)  \ \\\midrule
DMWalkerRun & $\mathbf{642 \pm 107}$ & $\mathbf{675 \pm 15}$ & $424 \pm 127$ \\ 
DMWalkerStand & $\mathbf{979 \pm 2}$ & $947 \pm 18$ & $634 \pm 184$ \\ 
DMWalkerWalk & $\mathbf{731 \pm 151}$ & $622 \pm 197$ & $\mathbf{766 \pm 103}$ \\ 
DMCheetahRun & $\mathbf{830 \pm 36}$ & $597 \pm 64$ & $\mathbf{505 \pm 182}$ \\
Ant & $\mathbf{4123 \pm 364}$ & $3059 \pm 360$ & $4358 \pm 496$\\
HalfCheetah & $8246 \pm 784$ & $\mathbf{9976 \pm 252}$ & $8927 \pm 596$\\
Ant(B) & $\mathbf{2954 \pm 54}$ & $1690 \pm 564$ & $\mathbf{2910 \pm 88}$\\
HalfCheetah(B) & $\mathbf{2619 \pm 129}$ & $\mathbf{2521 \pm 128}$ & $2284 \pm 85$ \\
\bottomrule
\end{tabular}
\end{sc}
\end{small}
\vskip -0.1in
\label{table:priority}
\end{table}

\subsection{Additional experiment results}

\begin{figure}[h]
\centering
\subfigure[\textbf{DMWalkerRun}]{\includegraphics[width=.24\linewidth]{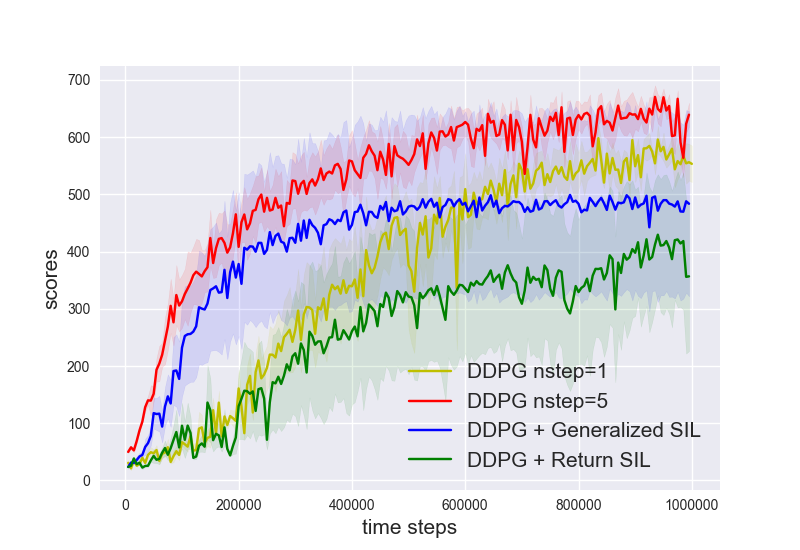}}
\subfigure[\textbf{DMWalkerStand}]{\includegraphics[width=.24\linewidth]{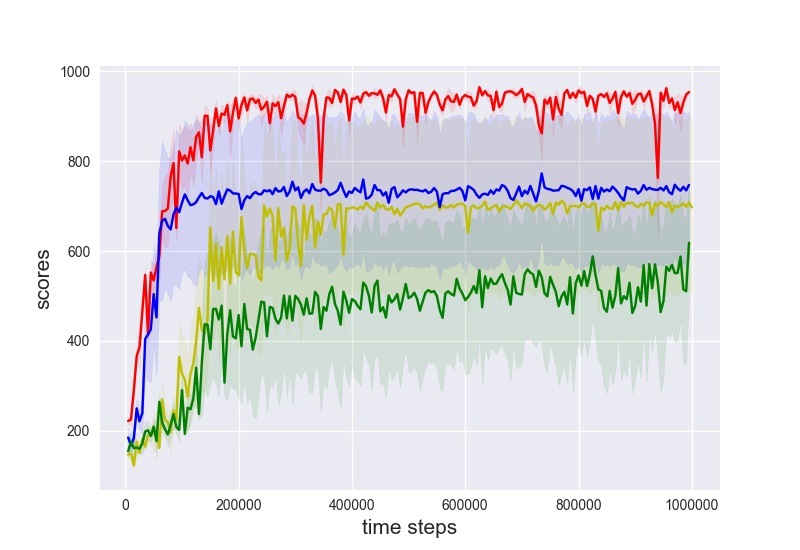}}
\subfigure[\textbf{Ant}]{\includegraphics[width=.24\linewidth]{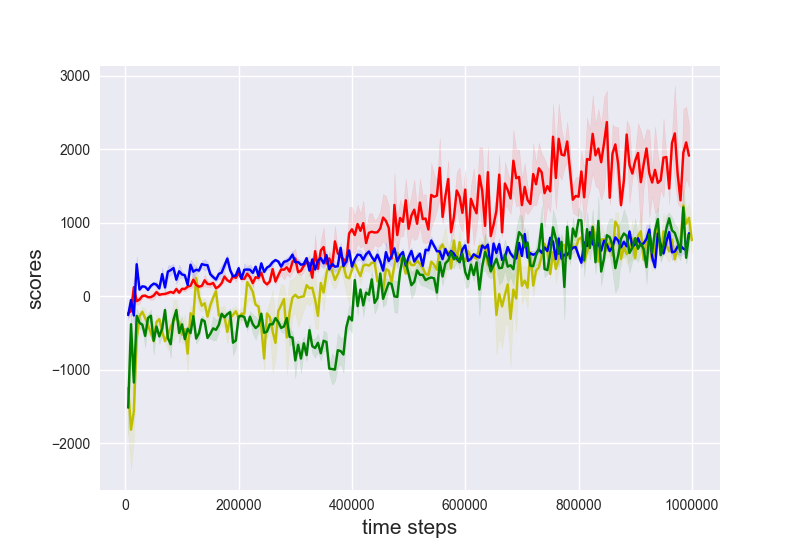}}
\subfigure[\textbf{HalfCheetah}]{\includegraphics[width=.24\linewidth]{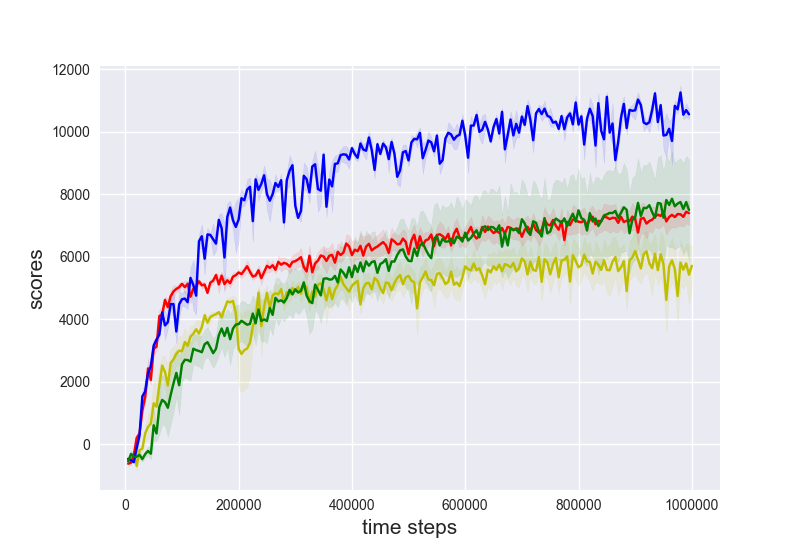}}
\caption{\small{Standard evaluations on $4$ simulation tasks for \gls{DDPG} baselines. Different colors represent different algorithmic variants. Each curve shows the $\text{mean}\pm 0.5 \text{std}$ of evaluation performance during training, averaged across $3$ random seeds. The x-axis shows the time steps and the y-axis shows the cumulative returns.}}
\label{figure:ddpg}
\end{figure}

\paragraph{Comparison across related baselines.} 
We make clear the comparison between related baselines in Table \ref{table:summary}. We present results for $n$-step \gls{TD3} with $n\in\{1,5\}$; \gls{TD3} with generalized \gls{SIL} with $n=5$ and its variants with different setups for prioritized sampling; \gls{TD3} with return-based \gls{SIL} ($n=\infty$). We show the results across all $8$ tasks - in each entry of Table \ref{table:summary} we show the $\text{mean} \pm \text{std}$ of performance averaged over $3$ seeds. The performance of each algorithmic variant is the average testing performance of the last $10^4$ training steps (from a total of $10^6$ training steps). The best algorithmic variant is highlighted in bold. We see that in general generalized \gls{SIL} with $n=5$ performs the best.

\paragraph{Ablation on the prioritized sampling.}
In prioritized sampling \citep{schaul2015prioritized}, when the tuples $d = (x_i,a_i,r_i)_{i=0}^n \in \mathcal{D}$ are sampled with priorities $s_d$, it is sampled with probability $p(d) \propto s_d^\alpha$. During updates, the IS correction consists in optimizing the loss $\mathbb{E}_{d}[w_d l_d]$ where $l_d$ is the loss computed from tuple $d$ and the IS correction weight $w_d = (N \cdot p_d)^{-\beta}$ where $N$ is the number of tuples in the buffer $\mathcal{D}$.

We compare several prioritized sampling variants of generalized \gls{SIL} in Table \ref{table:priority}. There are three variants: \gls{SIL} $n=5$ with both prioritized sampling ($\alpha=0.6$) and IS correction ($\beta=0.1$); \gls{SIL} $n=5$ with prioritized sampling ($\alpha=0.6$) only and without IS correction ($\beta=0.0$); \gls{SIL} $n=5$ with no prioritized sampling ($\alpha=\beta=0.0$). The performance setup in Table \ref{table:priority}
is the same as in Table \ref{table:summary}. It can be seen from Table \ref{table:priority} that generalized \gls{SIL} performs the best with full prioritized sampling.

\paragraph{Results on \gls{DDPG}.}
\gls{DDPG} is a baseline actor-critic algorithm with a deterministic actor \citep{lillicrap2015continuous}. Compared to \gls{TD3}, \gls{DDPG} does not adopt a double-critic approach \citep{fujimoto2018addressing} and suffers from over-estimation bias of the Q-function \citep{van2016deep}.

We present the baseline evaluation result of \gls{DDPG} in Figure \ref{figure:ddpg}, where we show the results for a few variants: \gls{DDPG} with $n$-step update, $n\in\{1,5\}$; \gls{DDPG} with generalized \gls{SIL} $n=5$ and \gls{DDPG} with return-based \gls{SIL} ($n=\infty$). We see that the performance gains of \gls{DDPG} with generalized \gls{SIL} $n=5$ are not as significant - indeed, overall \gls{DDPG} with $n=5$ has the best performance. We speculate that this is partly due to the over-estimation bias of \gls{DDPG}: the formulation of generalized \gls{SIL} is motivated by shifting the fixed point  $Q^\pi$ with an positive bias. The baseline algorithm benefits the most from generalized \gls{SIL} when indeed in practice $Q_\theta\approx Q^\pi$. However, this is not the case for \gls{DDPG} as the algorithm already has high positive bias in that $Q^\theta > Q^\pi$, which reduces the potential gains that come from generalized \gls{SIL}.

\end{document}